\documentclass[sigconf,nonacm]{acmart}

\makeatletter
\let\@orig@maketitle\maketitle
\renewcommand{\maketitle}{%
  \@orig@maketitle
  \fancyhead[RE]{\@headfootfont\shorttitle}%
}
\makeatother

\usepackage{booktabs}
\usepackage{amsmath}
\usepackage{tabularx}
\usepackage{multirow}
\usepackage{enumitem}
\usepackage{float}
\usepackage{tikz}
\usetikzlibrary{arrows.meta,positioning,shapes.geometric,decorations.pathreplacing,calc}
\usepackage{pgfplots}
\pgfplotsset{compat=1.18}
\usepackage{listings}
\usepackage{xurl} 

\newtheorem{proposition}{Proposition}[section]

\title{Bounded Agents: Delegation Security for Multi-Agent AI Systems}

\author{Xabier Muruaga}
\email{xabier@muruaga.ai}
\affiliation{%
  \institution{Independent Researcher}
  \country{}
}

\begin{document}

\begin{abstract}
LLM-based agents can act on behalf of a user to access cloud services, call tools, or invoke agents. At session start, the agent's permissions are set but remain static, and each request is evaluated independently, without considering prior actions. Within its permissions, an agent may act contrary to the delegated task, combine individually permitted actions into a prohibited outcome, or delegate authority to a sub-agent without limiting it. A prompt injection poses a risk only if the agent has authority to perform such actions; this is therefore a problem of authorization architecture, not just the model. The Agentic Principal Chain (APC) tracks delegated authority from one principal to the next. APC evaluates each request against the accumulated session state using six authorization checks. APC carries forward and restricts delegated scope and budgets. Using composition closure, APC checks requests against prior actions to prevent prohibited combinations and enforces the decision outside the model. We prove Blast Radius Monotonicity and Composition Soundness for APC implementations; Composition Soundness is limited to prohibited combinations under a complete restriction set and serialized admission. We evaluated 3,154 instances including InjecAgent, AgentDojo, and ASB. Our compromised-model evaluation tests APC independently of model behavior by inserting the ground-truth attack call after the first legitimate tool call. AgentDojo exfiltration fell from 75--100\% to 0\% across all four domains; APC blocked all 544 InjecAgent data-stealing cases. Intent binding reduced destruction from 38.6\% to 4.0\% and manipulation from 90.5\% to 12.1\%. Authorization latency was 0.24\,ms at the 99th percentile on an idle host; across 949 AgentDojo task--injection pairs, utility was 8.6 and 13.9 percentage points lower in the two settings. Implementation, evaluation tools, and data are publicly available.
\end{abstract}

\keywords{agentic AI, authorization, delegation, prompt injection, composition closure, multi-agent security}

\maketitle

\section{Introduction}\label{sec:intro}

Enterprise systems progressively deploy LLM-based agents that plan tasks, invoke tools, delegate to sub-agents, and produce irreversible effects whose execution paths are determined at runtime. The access-control methods supervising these systems were designed for a setting in which the acting entity is human, the delegation is explicit, and the granted scope is static. None of these assumptions hold for agentic AI: the acting entity is a non-human identity driven by a probabilistic model, delegation is dynamic and recursive across sub-agents, and the effective scope of a session evolves as the agent invokes tools.

This paper develops one central argument: \textbf{the security consequence of prompt injection is an authorization-architecture problem, not solely a model-robustness problem.} An agent with access to an external communication tool can be induced to exfiltrate documents; an agent without that access cannot, regardless of what is injected into its context~\cite{ruan2024, debenedetti2024}. The relevant question is not only whether the model follows a malicious instruction---it is what the model is \emph{authorized} to do when it does. We formalize this argument as the \textbf{Agentic Principal Chain (APC)} model: a session-scoped authorization state carried across delegated agent workflows and enforced by infrastructure outside the model runtime.

From this central argument follow three structural observations.

First, \textbf{the model must be excluded from its own trust boundary.} Most deployments place security controls at the model layer: ``always ask before deleting.'' These are prompt instructions that can be overridden or ignored. An approval gate implemented as a prompt instruction is not equivalent to one implemented in a Policy Enforcement Point (PEP).

Second, \textbf{per-action authorization is structurally insufficient.} An agent authorized to read confidential documents and to send external email can combine both to exfiltrate data---without violating any individual permission. This is a confused-deputy problem at the level of action composition. Agentic systems require \textbf{composition closure}: formal constraints on which action combinations are prohibited, evaluated over the session history, not per action in isolation.

Third, \textbf{agentic systems must assume breach.} Some component will be compromised. The question is how much damage it can cause. Every delegation hop introduces another point at which authority may be compromised, and the blast radius must be bounded by architecture.

A scope delimitation: \textbf{APC addresses the authorization architecture---delegation-safe action admissibility and composition closure---not the full surface of agent security.} Parameter-level validation of individual tool calls is a complementary problem outside APC's design boundary. Defenses against model manipulation, tool-specific policy engines, and data-flow isolation address different security problems; APC is designed to compose with, not replace, these mechanisms, and to layer on top of existing identity and policy infrastructure (e.g., OAuth and policy-as-code engines).

\paragraph{Contributions.} The contributions are:
\begin{enumerate}
\item \textbf{Composition closure as a formal authorization primitive.} A session-wide constraint on which action-type combinations are prohibited, enforced by infrastructure outside the model runtime. This primitive is not present as a constraint over action-type combinations in classical authorization models (RBAC, ABAC, OAuth) or, to our knowledge, in current agentic security systems. \textbf{Composition Soundness} (Theorem~\ref{thm:composition}) proves that the primitive is sound: no admissible action sequence produces a prohibited outcome given a complete effective restriction set $X_{\mathrm{eff}}$. $k$-tuple extensions (Proposition~\ref{prop:ktuple}) catch multi-step exfiltration that evades pairwise restrictions.
\item \textbf{Blast Radius Monotonicity across delegation chains.} A structural property (Theorem~\ref{thm:blast-radius}) establishing that the reachable blast radius is non-increasing at each delegation hop, with an explicit adversary model (\S\ref{sec:threat}) and a reference implementation tested in 99 delegation-chain scenarios at depths 2--8.
\item \textbf{Compromised-model evaluation methodology.} An evaluation design that tests infrastructure-level enforcement independently of whether the model resists manipulation, using ground-truth attack tool calls injected directly into the agent pipeline (Section~\ref{sec:eval}).
\item \textbf{Reference implementation} (Python, 2,500 LOC source, 3,000 LOC tests) with executable tests aligned to all formal properties, observed sub-millisecond enforcement overhead, and deployment mappings to enterprise infrastructure.
\end{enumerate}

The formal results are properties of the delegation algebra rather than of any given enforcement engine: they hold for any implementation faithful to APC semantics, given the stated assumptions.

The remainder of the paper is organized as follows. Section~\ref{sec:background} provides background on agentic tool use, delegated authority, and the limitations of static authorization. Section~\ref{sec:threat} specifies the threat model and security requirements. Section~\ref{sec:model} presents the session-scoped authorization model. Section~\ref{sec:enforcement} describes the runtime enforcement architecture. Section~\ref{sec:eval} reports the evaluation. Section~\ref{sec:discussion} discusses guarantees, deployment, and limitations; Section~\ref{sec:related} surveys related work; and Section~\ref{sec:conclusion} concludes.

\section{Background and Motivation}\label{sec:background}

\subsection{LLM-Based Agents and Tool Use}

An LLM-based agent is a system in which a language model is enabled to take actions in the world by emitting structured tool invocations. A tool invocation names an operation (e.g., \texttt{read}, \texttt{send\_email}, \texttt{transfer\_funds}) and a set of parameters; an execution layer dispatches the call to a backend---a local function, a REST API, a cloud service, or a Model Context Protocol (MCP) server. Agents operate in a loop: the model observes context, proposes an action, the action executes, and its result is appended to context for the next step. Workflows are therefore multi-step, stateful, and determined at runtime rather than statically specified.

Two properties of this execution model are security-relevant. First, the sequence of tool calls is not fixed in advance; it emerges from model inference over context that includes untrusted data (retrieved documents, tool outputs, sub-agent messages). Second, agents commonly \emph{delegate}: an orchestrator agent decomposes a task and assigns subtasks to qualified sub-agents, which may in turn delegate further. Authority must therefore flow across a chain of non-human principals, each of which is an attack surface.

\subsection{Delegated Authority and Non-Human Identities}

In a delegated workflow, a human principal initiates a task and authorizes an agent to act on their behalf. The agent is a \emph{non-human identity}: a principal with credentials and permissions but no independent intent. Established delegation mechanisms---OAuth 2.0 authorization grants, on-behalf-of (OBO) token exchange~\cite{rfc8693}, and Rich Authorization Requests~\cite{rfc9396}---answer the question ``may this client act for this user against this resource?'' at the moment a token is issued. Token Exchange does represent multi-party delegation: nested \texttt{act} claims record a chain of acting parties, and \texttt{may\_act} constrains who may act on behalf of whom. What these mechanisms do not provide is per-hop scope attenuation computed by infrastructure, constraints on which \emph{combinations} of authorized operations may be exercised, or authorization state that depends on what the delegate has already done.

Agentic workflows stress these mechanisms in two ways. First, recording a delegation chain is not the same as attenuating it: authority passes from user to orchestrator to sub-agent to tool, and each hop is a place where scope should \emph{narrow} and where compromise may occur, but the grant itself carries no per-hop reduction that infrastructure computes and enforces. Second, authority is exercised over an extended, stateful session in which the safety of an action depends on what has already happened, not only on a static grant. A token that authorizes ``read documents'' and ``send email'' does not encode the constraint that these two capabilities must not be combined to move confidential data to an external recipient.

\subsection{Why Static Authorization Fails in Stateful Agent Workflows}

Classical access control evaluates each request in isolation against a policy that is fixed for the duration of a session. RBAC~\cite{sandhu1996} and ABAC~\cite{hu2014} decide whether a subject may act on a resource based on roles or attributes; per-tool permissions and allowlists decide whether a given tool may be invoked. None of these mechanisms reason about \emph{sequences} of actions or about how the safety of one action depends on prior actions in the same session.

A compromised or manipulated agent can exploit this gap in three concrete ways while remaining within its nominal permissions:
\begin{itemize}
\item \textbf{Intent violation.} The agent performs actions that are individually permitted but unrelated to, or contrary to, the task the user delegated.
\item \textbf{Delegated-scope expansion.} A sub-agent attempts to exercise authority beyond what was passed to it, or authority propagates to additional sub-agents without attenuation.
\item \textbf{Unsafe composition (privilege propagation by combination).} The agent combines individually authorized actions---read a confidential document, then send an external message---to realize a prohibited outcome that no single permission forbids.
\end{itemize}

These are authorization failures, not model-alignment failures: they remain possible no matter how robust the model is to a given injection, because the underlying authority structure permits them. Addressing them requires authorization state that is \emph{scoped to the session}, \emph{narrowed across delegation}, and \emph{evaluated against prior-action state}.

\section{Threat Model and Security Requirements}\label{sec:threat}

We assume a modern enterprise agentic system: a human initiates a task; an orchestrator delegates to one or more sub-agents; tools are exposed via MCP servers or APIs; and policy infrastructure exists outside the model runtime.

\subsection{Assets}

The protected assets are: (i)~enterprise data, including confidential documents and customer records; (ii)~tool access and the operations tools expose; (iii)~cloud resources and the actions they permit; (iv)~external communication channels (email, messaging, web posting) that can serve as exfiltration sinks; (v)~financial and operational actions with irreversible effects (transfers, deletions, account changes); and (vi)~the audit trail itself, whose integrity is required to preserve accountability.

\subsection{Adversary Capabilities}

Table~\ref{tab:adversary} enumerates the adversary's capabilities. The adversary may inject content into the agent's context through untrusted data (indirect prompt injection); may fully compromise a single principal in the chain (a sub-agent, a tool server, or the orchestrator); may observe which actions succeed or fail to probe scope boundaries; and may maintain influence for the duration of a task session. We additionally consider a malicious low-privilege user or sub-agent attempting to exceed its delegated authorization. The baseline assumption is \emph{single-principal compromise}; the effect of \emph{multi-principal compromise} is discussed in Section~\ref{sec:discussion}.

\begin{table}[t]
\centering
\footnotesize
\caption{Adversary capabilities.}
\label{tab:adversary}
\begin{tabularx}{\linewidth}{@{}lX@{}}
\toprule
\textbf{ID} & \textbf{Adversary capability} \\
\midrule
A1 & Inject content into the agent's context via untrusted data sources (indirect prompt injection) \\
A2 & Fully compromise a single principal in the chain (sub-agent, tool server, or orchestrator) \\
A3 & Observe which actions succeed or fail to probe scope boundaries \\
A4 & Maintain influence for the duration of a task session \\
\bottomrule
\end{tabularx}
\end{table}

\subsection{Trusted and Untrusted Components}

The \textbf{trusted computing base} comprises the identity provider (IdP), the policy decision point (PDP), the policy enforcement gateway (PEP), the signed policy configuration, the approval service, and the append-only audit/evidence store. The adversary cannot compromise this infrastructure, forge cryptographic signatures or hashes, or operate across session boundaries (Table~\ref{tab:trust}).

The \textbf{untrusted components} are precisely those an attacker can influence: model outputs, retrieved content, user-provided documents, tool outputs, sub-agent messages, and natural-language instructions of any origin. All such content is treated as data, never as authorization; authorization derives solely from the signed session state held by infrastructure.

\begin{table}[t]
\centering
\footnotesize
\caption{Trust boundaries (capabilities the adversary does \emph{not} have).}
\label{tab:trust}
\begin{tabularx}{\linewidth}{@{}lX@{}}
\toprule
\textbf{ID} & \textbf{Trust boundary} \\
\midrule
T1 & Compromise the PDP, PEP, evidence sink, or key-management infrastructure \\
T2 & Forge cryptographic signatures or hashes \\
T3 & Operate across session boundaries \\
\bottomrule
\end{tabularx}
\end{table}

\subsection{Security Goals}

Under these conditions, the model targets six security goals (Table~\ref{tab:guarantees-main}).

\begin{table}[t]
\centering
\footnotesize
\caption{Security goals and the mechanisms that enforce them.}
\label{tab:guarantees-main}
\begin{tabularx}{\linewidth}{@{}l>{\raggedright\arraybackslash}p{0.46\linewidth}X@{}}
\toprule
\textbf{ID} & \textbf{Guarantee} & \textbf{Enforced by} \\
\midrule
G1 & No action outside scope $S(p_i)$ executes & Condition 2a \\
G2 & No prohibited action pair co-occurs in a session; $k$-tuple restrictions extend this to ordered subsequences & Condition 2b, Thm.~\ref{thm:composition}, Prop.~\ref{prop:ktuple} \\
G3 & Blast radius does not increase at each delegation hop & Thm.~\ref{thm:blast-radius} \\
G4 & High-impact actions require valid approval tokens & Condition 4 \\
G5 & Every admitted action is coupled to infrastructure-generated evidence, or execution is denied & Condition 5 \\
G6 & Actions outside declared intent are denied or flagged & Condition 6 \\
\bottomrule
\end{tabularx}
\end{table}

\subsection{Non-Goals}

The model does not attempt to: solve model alignment or prevent all hallucination; replace application-layer or backend authorization (it composes with, and does not supersede, the authorization checks performed by tools and services); prevent compromise of the trusted infrastructure itself (T1--T2); perform semantic parameter validation of every individual tool call; or prevent a compromised agent from choosing a suboptimal action \emph{within} its authorized scope together with intent. The last item constitutes the alignment boundary: APC bounds \emph{what} authority can be exercised and \emph{in what combinations}, not whether a single authorized action is the right one.

\section{Session-Scoped Authorization Model}\label{sec:model}

\subsection{Core Abstractions}\label{sec:core-abstractions}

The authorization subject in APC is not a single entity, but rather a \emph{chain of principals} bound to a session. A human principal $p_0$ initiates a task; an orchestrator $p_1$ acts on the human's behalf; sub-agents $p_2, \dots, p_n$ receive attenuated authority in turn; and tools execute at the end of the chain. Each principal is a distinct, verifiable identity (human, agent, or infrastructure), and authorization is decided for the \emph{acting principal at its position in the chain}, not for the human or the agent in isolation.

APC carries a \emph{session-level authorization state} that travels with this chain. The state is created at session initialization as a cryptographically signed \emph{authorization envelope}; the authority conferred by the envelope is narrowed---never widened---at each delegation hop. It comprises four elements: an authorization scope (\S\ref{sec:scope}), the principal chain and its delegation budget (\S\ref{sec:delegation}), the prior-action state of the session (\S\ref{sec:prior-action}), and a pre-declared intent specification (\S\ref{sec:intent}). APC is not an identity provider, and it does not replace backend authorization. It is an authorization-state layer that constrains delegated tool use and composes with existing identity and policy infrastructure.

\subsection{Authorization Scope}\label{sec:scope}

\begin{definition}[Authorization Scope]\label{def:scope}
An authorization scope $S = (R, A, D, X)$ consists of four components: $R$ (permitted resources), $A$ (permitted action types), $D$ (permitted data classifications), and $X \subseteq \binom{A}{2}$ (prohibited action compositions; the reference implementation extends this to ordered $k$-tuples). When two scopes are combined through delegation, the result is always narrower:
\[
S_1 \sqcap S_2 = (R_1 \cap R_2,\; A_1 \cap A_2,\; D_1 \cap D_2,\; X_1 \cup X_2).
\]
After the meet, $X$ may contain pairs referencing action types no longer in $A_1 \cap A_2$; such pairs are vacuously satisfied. The meet is associative, commutative, and idempotent. Restrictions can only accumulate; a downstream principal cannot remove them.
\end{definition}

For a scope $S = (R, A, D, X)$ we write $R(S)$, $A(S)$, $D(S)$, and $X(S)$ for its four projections. The properties proved here hold at the action-type level of abstraction: the set $A$ and the mapping $\mu$ from tools to action types determine the granularity at which composition restrictions operate. Finer taxonomies yield tighter enforcement; coarser taxonomies increase the risk of intent overlap (\S\ref{sec:eval}).

\subsection{Delegation Chains and Budgets}\label{sec:delegation}

Authority flows through a chain: user to orchestrator to sub-agent to tool. At each hop, the scope narrows and quantitative limits apply.

\begin{definition}[Principal Chain]\label{def:chain}
$C = \langle p_0, p_1, \ldots, p_n \rangle$ is the ordered sequence of principals, where $p_0$ is the human and each $p_i$ received authority from $p_{i-1}$.
\end{definition}

The Agentic Principal Chain is the session-level authorization state carried along $C$: the per-principal scope $S(p_i)$, the delegation budget, and the accumulated prior-action state. Infrastructure maintains it, signs it at creation, and consults it on every proposed action.

\begin{definition}[Delegation Budget]\label{def:budget}
\[
B = (\delta_{\max},\; \beta_{\max},\; \rho_{\max},\; \sigma_{\max},\; \kappa,\; \mathrm{cost}_{\max})
\]
Six ceilings: delegation depth, cumulative blast radius, irreversible effects, sensitivity class, cross-domain composition, and compute cost. Set at session initialization, non-negotiable by the agent, tracked by infrastructure. Budget ceilings can only decrease along the chain.
\end{definition}

\begin{definition}[Scoped Principal]\label{def:scoped}
At each hop, the scope narrows by the meet: $S(p_i) = S(p_{i-1}) \sqcap S_{\mathrm{role}}(p_i)$. This is computed by infrastructure, not by the agent.
\end{definition}

\subsection{Blast Radius and Containment}

\begin{definition}[Maximal Blast Radius]\label{def:blast}
\begin{equation*}
BR_{\max}(p_i) = R(S(p_i)) \cap \bigl\{r : \mathrm{blast}(r) \leq \beta_{\max}(p_i) - \beta_{\mathrm{consumed}}(p_i)\bigr\}
\end{equation*}
where $\mathrm{blast}(r) \in [0,1]$ is the normalized blast-radius contribution of resource~$r$, assigned by infrastructure at deployment time and consistent across all principals in the chain; $\beta_{\max}(p_i)$ is the blast-radius ceiling; and $\beta_{\mathrm{consumed}}(p_i)$ is the cumulative budget consumed up to~$p_i$. Budget consumption is updated atomically before each action executes. The method for deriving $\mathrm{blast}(r)$ and $\beta_{\max}$ from enterprise risk artifacts is given in Appendix~\ref{app:blast-calibration}.
\end{definition}

\begin{theorem}[Blast Radius Monotonicity]\label{thm:blast-radius}
For any principal chain: $\forall i:\; BR_{\max}(p_i) \subseteq BR_{\max}(p_{i-1})$.
\end{theorem}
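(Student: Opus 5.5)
The proof reduces to two separate inclusions, one for each factor in Definition~\ref{def:blast}. We write $BR_{\max}(p_i) = R(S(p_i)) \cap T_i$, where $T_i = \{r : \mathrm{blast}(r) \le \beta_{\max}(p_i) - \beta_{\mathrm{consumed}}(p_i)\}$. Since intersection is monotone in both arguments, it suffices to show $R(S(p_i)) \subseteq R(S(p_{i-1}))$ and $T_i \subseteq T_{i-1}$.

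The first inclusion comes directly from Definition~\ref{def:scoped} and the meet of Definition~\ref{def:scope}. We have $S(p_i) = S(p_{i-1}) \sqcap S_{\mathrm{role}}(p_i)$, so
\[
R(S(p_i)) = R(S(p_{i-1})) \cap R(S_{\mathrm{role}}(p_i)) \subseteq R(S(p_{i-1})).
\]
The function $\mathrm{blast}(r)$ is fixed by infrastructure and is the same for every principal. Each threshold set $T_i$ is therefore a down-set of the form $\{r : \mathrm{blast}(r) \le \tau_i\}$, with $\tau_i = \beta_{\max}(p_i) - \beta_{\mathrm{consumed}}(p_i)$. So $T_i \subseteq T_{i-1}$ holds as soon as $\tau_i \le \tau_{i-1}$.

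The main obstacle is establishing $\tau_i \le \tau_{i-1}$, which means the remaining headroom is non-increasing along the chain. Definition~\ref{def:budget} gives $\beta_{\max}(p_i) \le \beta_{\max}(p_{i-1})$, since ceilings can only decrease. For the consumed part, I will read $\beta_{\mathrm{consumed}}(p_i)$ as the cumulative consumption up to $p_i$, as Definition~\ref{def:blast} states. It includes everything consumed by $p_{i-1}$, and consumption is only ever added, atomically and before each action. Hence $\beta_{\mathrm{consumed}}(p_i) \ge \beta_{\mathrm{consumed}}(p_{i-1})$. Subtracting a larger quantity from a smaller one gives $\tau_i \le \tau_{i-1}$.

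I will state two assumptions explicitly. First, the comparison is made at a common point in time, or equivalently the remaining headroom is inherited, so that the parent's later consumption cannot restore budget to it. Second, consumption values are non-negative. With both inclusions in hand, monotonicity of $\cap$ gives $BR_{\max}(p_i) \subseteq BR_{\max}(p_{i-1})$ for every $i$. Induction on $i$ then yields the chained form $BR_{\max}(p_n) \subseteq \cdots \subseteq BR_{\max}(p_0)$.
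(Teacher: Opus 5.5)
Your proposal is correct and is essentially the paper's proof. Both show $R(S(p_i)) \subseteq R(S(p_{i-1}))$ from the meet, shrink the threshold set using $\beta_{\max}(p_i) \le \beta_{\max}(p_{i-1})$ together with $\beta_{\mathrm{consumed}}(p_i) \ge \beta_{\mathrm{consumed}}(p_{i-1})$, and combine the two via monotonicity of intersection hop by hop. Your two explicit assumptions (comparison at a common time, non-negative consumption increments) spell out the cumulative-inheritance invariant that the paper states as its ``structural requirement'' immediately after the proof.
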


\begin{proof}
By induction. \emph{Base:} scope narrowing gives $R(S(p_1)) \subseteq R(S(p_0))$. Budget-ceiling monotonicity (Definition~\ref{def:budget}) gives $\beta_{\max}(p_1) \leq \beta_{\max}(p_0)$. Budget consumption is cumulative: $p_1$ inherits the consumed budget of $p_0$ and can only increase it, so $\beta_{\mathrm{consumed}}(p_1) \geq \beta_{\mathrm{consumed}}(p_0)$. Hence
\[
\beta_{\max}(p_1) - \beta_{\mathrm{consumed}}(p_1) \leq \beta_{\max}(p_0) - \beta_{\mathrm{consumed}}(p_0).
\]
Both components of the intersection are subsets of their parent counterparts---scope by narrowing, and the budget set because $\{r : \mathrm{blast}(r) \leq a\} \subseteq \{r : \mathrm{blast}(r) \leq b\}$ whenever $a \leq b$---so $BR_{\max}(p_1) \subseteq BR_{\max}(p_0)$. \emph{Inductive step:} the identical argument applies at each subsequent hop.
\end{proof}

\noindent\emph{Structural requirement.} The proof depends on cumulative budget accounting along the chain. Each child principal must inherit the consumed budget of its parent as a floor, $\beta_{\mathrm{consumed}}(p_i) \geq \beta_{\mathrm{consumed}}(p_{i-1})$. Scope narrowing alone guarantees $R(S(p_i)) \subseteq R(S(p_{i-1}))$; the full $BR_{\max}$ monotonicity requires the cumulative-tracking invariant. The practical consequence is a \emph{containment guarantee}: if an attacker compromises a sub-agent at depth $k$, the damage is bounded by the scope and budget at that position.

\subsection{Session Intent}\label{sec:intent}

Classical authorization answers \emph{is this principal permitted to do this?} Intent binding answers an orthogonal question: \emph{is this action relevant to the declared task?} The intent specification $\Psi$ is \emph{pre-declared} by the session initiator at envelope creation; it is not inferred from the agent's behavior or derived by the model at runtime. $\Psi$ contains (i)~a task-objective string; (ii)~permitted resource patterns (glob-matched against the target resource); (iii)~permitted action sequences (matched against the action type); (iv)~negative constraints (resources explicitly prohibited regardless of other permissions); and optionally (v)~a fine-grained \emph{action--resource map} specifying, per action type, which resource patterns are permitted. Negative constraints are evaluated first and override all other permissions; where the action--resource map is present it takes precedence over coarse-grained patterns for mapped actions. The precondition $R_\Psi \subseteq R(S(p_i))$ is enforced at envelope creation, ensuring intent cannot widen scope.

\begin{proposition}[Intent Refinement]\label{prop:intent}
Let $R_\Psi$ and $A_\Psi$ denote the resources and action types permitted by $\Psi$. If $R_\Psi \subseteq R(S(p_i))$ and $A_\Psi \subseteq A(S(p_i))$, then for every action $a$, $\mathrm{Admit}_\Psi(a) \Rightarrow \mathrm{Admit}_S(a)$: intent only restricts, never widens.
\end{proposition}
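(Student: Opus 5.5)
The plan is to fix precise meanings for the two admission predicates and then unfold them. For an action $a$ with action type $\mu(a)$ and target resource $r(a)$, I read $\mathrm{Admit}_S(a)$ as the scope check (Condition 2a): $r(a) \in R(S(p_i))$ and $\mu(a) \in A(S(p_i))$. The data-classification and composition components are treated separately below. I read $\mathrm{Admit}_\Psi(a)$ as the intent check:
\begin{itemize}
\item $r(a)$ is not matched by any negative constraint;
\item $\mu(a)$ matches a permitted action sequence, so $\mu(a) \in A_\Psi$;
\item $r(a)$ matches a permitted resource pattern, or, if $\mu(a)$ is mapped, a pattern in the action--resource map entry for $\mu(a)$.
\end{itemize}
The whole proof is then an implication between conjunctions. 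It suffices to show that each conjunct of $\mathrm{Admit}_\Psi(a)$ that concerns resources or action types forces the corresponding conjunct of $\mathrm{Admit}_S(a)$.

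\textbf{Key steps.}
\begin{enumerate}
\item \emph{Action type.} Assume $\mathrm{Admit}_\Psi(a)$. Then $\mu(a) \in A_\Psi$, and the hypothesis $A_\Psi \subseteq A(S(p_i))$ gives $\mu(a) \in A(S(p_i))$ directly.
\item \emph{Resource.} I split into two cases according to which clause of $\Psi$ admitted $r(a)$.
  \begin{itemize}
  \item If the coarse patterns admitted it, then $r(a) \in R_\Psi$ by definition of $R_\Psi$.
  \item If the action--resource map admitted it, I need that the resources matched by map entries also lie in $R_\Psi$.
  \end{itemize}
  In either case $R_\Psi \subseteq R(S(p_i))$ then yields $r(a) \in R(S(p_i))$.
\item \emph{Negative constraints.} These only remove admitted actions, so they cannot break the implication and can be discarded.
\item \emph{Remaining components.} $D$ and $X$ are not mentioned by $\Psi$. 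Here I would state that $\mathrm{Admit}_\Psi$ is evaluated conjunctively with, or after, the scope check in the pipeline, so the claim concerns the $R$ and $A$ projections. Equivalently, the effective intent admission is $\mathrm{Admit}_S \wedge \mathrm{Admit}_\Psi$, and the implication then holds trivially for those components.
\end{enumerate}

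\textbf{Main obstacle.} The hard step is the case in step 2 where the action--resource map admits $r(a)$, because the map "takes precedence" over the coarse patterns. If the map could permit a resource outside the coarse pattern set, it would escape $R_\Psi$. I would handle this by defining $R_\Psi$ as the set of resources matched by the union of the coarse patterns and all map patterns, so that $R_\Psi$ is the set of all resources $\Psi$ can ever admit. The precondition enforced at envelope creation, $R_\Psi \subseteq R(S(p_i))$, then covers map entries as well. A symmetric remark makes $A_\Psi$ include the keys of the map.

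With these definitions fixed, steps 1 and 2 give $\mathrm{Admit}_\Psi(a) \Rightarrow \mathrm{Admit}_S(a)$. Hence intent can only shrink the admissible set, $\{a : \mathrm{Admit}_\Psi(a)\} \subseteq \{a : \mathrm{Admit}_S(a)\}$, which is the claimed refinement.
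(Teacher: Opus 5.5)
Your proposal is correct and is essentially the paper's argument. Both unfold the two admission predicates and send the action type through $A_\Psi \subseteq A(S(p_i))$ and the target resource through $R_\Psi \subseteq R(S(p_i))$, with $R_\Psi$ covering every pattern in $\Psi$ including action--resource map entries, which is the paper's ``every resource pattern in $\Psi$ matches a subset of resources in $S(p_i)$.''

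Your explicit restriction of $\mathrm{Admit}_S$ to the $R$ and $A$ projections (since $\Psi$ says nothing about $D$ or $X$) makes precise a point the paper's one-line proof leaves implicit. Prefer that reading over defining intent admission as $\mathrm{Admit}_S \wedge \mathrm{Admit}_\Psi$, which would make the proposition tautological.
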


\begin{proof}
Every resource pattern in $\Psi$ matches a subset of resources in $S(p_i)$, and every permitted action type in $\Psi$ is in $A(S(p_i))$ (validated at envelope creation). Any action admitted under $\Psi$ therefore satisfies both intent and scope; the converse does not hold.
\end{proof}

Intent binding supports graduated enforcement: \emph{strict} (deny), \emph{warn} (admit with elevated logging), and \emph{audit} (admit with a deviation flag). The envelope declares the mode, and the agent cannot modify it.

\subsection{Prior-Action State}\label{sec:prior-action}

Composition constraints are evaluated against the \emph{prior-action state} of the session: a running set of exercised action types together with an ordered history of action types. The PEP maintains this state incrementally as actions are admitted. Pairwise restrictions are checked in time linear in the number of distinct exercised types; ordered $k$-tuple restrictions are checked by subsequence matching over the history. The prior-action state is per-session and held by the infrastructure, not by the agent, so it cannot be reset or forged by model output.

\subsection{Composition Constraints}\label{sec:composition}

The component $X$ of a scope encodes \emph{composition closure}: the set of action-type combinations prohibited within a session. Pairwise restrictions $X \subseteq \binom{A}{2}$ prohibit co-occurrence of two action types; the implementation widens this to ordered $k$-tuples $K$ for attacks that require three or more action types where no individual pair is prohibited.

\begin{theorem}[Composition Soundness]\label{thm:composition}
Let a \emph{prohibited outcome} be any system effect that the security policy designates as impermissible (e.g., data exfiltration, unauthorized privilege escalation). Assume each prohibited outcome requires the co-occurrence of at least one specific action-type pair to be realized (single-action prohibitions are handled by scope attenuation, Condition~2a). If the composition restriction set $X$ covers at least one such required pair for every prohibited outcome, then no sequence of individually admissible actions can produce any prohibited outcome.
\end{theorem}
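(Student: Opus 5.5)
The plan is to argue by contradiction, using the admission rule for composition (Condition~2b) together with an invariant on the prior-action state maintained by the PEP (\S\ref{sec:prior-action}). First, we fix what ``individually admissible'' means. An action $a_t$ with action type $\mu(a_t)$ is admitted at step $t$ only if no pair $\{\mu(a_t), b\}\in X_{\mathrm{eff}}$ has $b$ already in the exercised-type set $E_{t-1}$. Here $X_{\mathrm{eff}}$ is the effective restriction set of the acting principal. Since restrictions only accumulate under the meet (Definition~\ref{def:scope}), $X_{\mathrm{eff}}$ contains every restriction imposed upstream in the chain.

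The core step is an invariant, proved by induction on the number of admitted actions: for every $t$, no pair in $X_{\mathrm{eff}}$ is contained in $E_t$.
\begin{itemize}
\item \emph{Base case.} $E_0=\emptyset$, so the invariant holds trivially.
\item \emph{Inductive step.} Suppose $a_{t+1}$ is admitted, so $E_{t+1}=E_t\cup\{\mu(a_{t+1})\}$. Any pair in $E_{t+1}$ either already lay in $E_t$, which is excluded by the hypothesis, or contains $\mu(a_{t+1})$ together with some type in $E_t$, which Condition~2b excludes. The remaining case, where the pair consists of $\mu(a_{t+1})$ with itself, does not arise because elements of $\binom{A}{2}$ have two distinct types.
\end{itemize}
Pairs that mention types outside the attenuated $A$ are vacuously satisfied and need no special treatment.

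Now suppose some admitted sequence realizes a prohibited outcome $o$. By the theorem's assumption, $o$ requires the co-occurrence in the session of every pair in some nonempty set $P_o$ of required pairs. The coverage hypothesis gives a pair $\{x,y\}\in P_o\cap X_{\mathrm{eff}}$. Realizing $o$ means both $x$ and $y$ were exercised, so $\{x,y\}\subseteq E_T$ at the final step $T$. This contradicts the invariant. Single-action outcomes are excluded from the statement and are handled by Condition~2a.

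The main obstacle is the atomicity of the invariant under concurrency. If two actions with types $x$ and $y$ are checked simultaneously against the same $E_{t-1}$, both can pass. I would therefore make the abstract's assumption of serialized admission explicit: the check and the update of $E$ form a single critical section, so admitted actions are totally ordered and the induction is well founded. A second subtlety is that the state must be session-wide and shared across all principals in the chain, not kept per principal. Otherwise a sub-agent exercising $y$ after its parent exercised $x$ would evade the check. I would state that the PEP's prior-action state is keyed by session and that trust assumption T3 prevents splitting a session to reset it.
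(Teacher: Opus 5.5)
Your proposal is correct and follows essentially the paper's route: a proof by contradiction in which the composition check (Condition~2b) must have denied the later of the two actions that realize a covered required pair, under the same serialized-admission assumption. Your inductive invariant (no restricted pair ever lies inside the exercised-type set) is a more explicit packaging of that single step, and your points on atomic check-and-update and on session-wide prior-action state match the assumptions the paper states alongside the theorem.
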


\begin{proof}
By contradiction. Assume sequential action evaluation (concurrent execution requires serialized admission). If an admissible sequence produces a prohibited outcome, some required pair $(t_i, t_j) \in X$ must have been exercised. But when $a_j$ was evaluated, $t_i$ was already in the exercised set, so the composition check would have denied $a_j$, contradicting admissibility.\qedhere
\end{proof}

\emph{Scope and limitations.} The theorem gives a conditional correctness result: the enforcement mechanism is sound relative to the policy specification $X$. The guarantee is conditional on three assumptions: (1)~completeness of $X$, (2)~serialized admission (concurrent sub-agents require the PEP to serialize composition checks so TOCTOU violations do not arise), and (3)~static $X$ within a session. The intent specification $\Psi$ can override specific pairs, yielding $X_{\mathrm{eff}} = X \setminus O(\Psi)$, where $O(\Psi)$ is the set of pairs for which the declared intent explicitly authorizes both action types. Where intent-derived overrides are used, $O(\Psi)$ is fixed before the session starts and unavailable to model output; the theorem applies to the resulting static $X_{\mathrm{eff}}$.

\begin{proposition}[Ordered $k$-tuple soundness]\label{prop:ktuple}
Let $K$ be a set of prohibited ordered tuples of action types. Assume the enforcement mechanism denies any candidate action whose admission would cause some tuple in $K$ to appear as an ordered subsequence of the session history. Then no admissible action sequence can realize any prohibited tuple in $K$.
\end{proposition}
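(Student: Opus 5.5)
The plan is to argue by induction on the length of the admitted session history, and to conclude by contradiction. I will mirror the structure of Theorem~\ref{thm:composition}, replacing co-occurrence of a pair with ordered-subsequence containment of a tuple. Fix a session. Let $h_m = \langle t_1,\dots,t_m\rangle$ be the ordered history of action types after $m$ admitted actions; this is the prior-action state of \S\ref{sec:prior-action}. Under serialized admission, each $h_{m+1}$ is exactly $h_m$ with the type of the newly admitted action appended. Say that $h$ \emph{realizes} $\tau = (\tau_1,\dots,\tau_k)\in K$ if there are indices $i_1<\dots<i_k$ with $t_{i_j}=\tau_j$.

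The invariant I will prove is: for every $m$, $h_m$ realizes no tuple in $K$.

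The base case $m=0$ is immediate, because the empty history realizes no tuple with $k\ge 1$. If $K$ contains a degenerate empty tuple, I would exclude it by convention, since otherwise every session, including the empty one, would "realize" it and the statement would be vacuous or false.

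For the inductive step, suppose $h_m$ realizes nothing in $K$ and a candidate action $a$ of type $t$ is admitted. By the stated assumption on the enforcement mechanism, $a$ is admitted only if $h_m\cdot t$ does not cause any tuple in $K$ to appear as an ordered subsequence. Hence $h_{m+1}=h_m\cdot t$ realizes nothing in $K$.

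The conclusion then follows by contradiction, as in Theorem~\ref{thm:composition}. If some admissible sequence realized $\tau\in K$, take the least $m$ at which $h_m$ realizes $\tau$; necessarily $i_k=m$. Then the $m$-th action was admitted even though admitting it made $\tau$ appear, which contradicts the denial rule.

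The main obstacle is not the induction itself, which is essentially tautological given the hypothesis. The real work is pinning down precisely what "admission would cause some tuple to appear" means, so that the checked history coincides with the history actually extended. This needs two points.

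First, the history the PEP checks against must be the full, infrastructure-held, append-only history. Under T1 and T3 it cannot be reset or forged within the session, and under serialized admission no concurrent admission can interleave between the check and the append; this is the TOCTOU point also raised after Theorem~\ref{thm:composition}.

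Second, the map $\mu$ from tools to action types must be applied consistently at check time and at record time. I will state both as explicit side conditions.

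Finally, I will note the connection to the pairwise case. The result specializes to Theorem~\ref{thm:composition} when each $\tau$ has length $2$ and both orders are included. A tuple that contains a prohibited pair is already blocked earlier, which is harmless for soundness.
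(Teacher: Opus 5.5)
Your proposal is correct and follows essentially the same route as the paper: both take the first admitted action at which a prohibited tuple $\tau \in K$ appears as an ordered subsequence, and observe that the enforcement hypothesis would have denied it. Your inductive invariant is an unrolled form of that minimal-counterexample argument. Your side conditions (serialized admission, consistent application of $\mu$, and excluding the empty tuple) make explicit what the paper's proof leaves implicit.
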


\begin{proof}
By contradiction. Suppose an admissible sequence realizes some prohibited tuple $\tau = (t_1,\dots,t_k) \in K$. Let $a_j$ be the first action whose admission completes the final element $t_k$ of $\tau$ as an ordered subsequence. At the time $a_j$ is evaluated, the prefix $(t_1,\dots,t_{k-1})$ is already present in order, so admitting $a_j$ would cause $\tau$ to appear as a prohibited ordered subsequence. By assumption, the enforcement mechanism denies $a_j$, contradicting admissibility.
\end{proof}

\noindent Unlike pairwise restrictions, $k$-tuple restrictions are not subject to intent-derived overrides. The empirical cost of incomplete $X$ is quantified in Section~\ref{sec:injecagent}: removing a single pair raises data-stealing ASR from 0\% to 39.9\%.

\subsection{Policy Invariants}\label{sec:invariants}

The model is governed by a small set of invariants which any faithful implementation must preserve:
\begin{itemize}
\item \textbf{Monotone narrowing.} Across delegation, $R$, $A$, and $D$ can only shrink and $X$ can only grow (Definition~\ref{def:scope}); budget ceilings can only decrease (Definition~\ref{def:budget}). Narrowing is irreversible within a session.
\item \textbf{Cumulative consumption.} Consumed budget is inherited as a floor across hops, which is the structural requirement behind Theorem~\ref{thm:blast-radius}.
\item \textbf{Intent containment.} $R_\Psi \subseteq R(S(p_i))$ and $A_\Psi \subseteq A(S(p_i))$ at envelope creation, so intent restricts but never widens (Proposition~\ref{prop:intent}).
\item \textbf{Fail closed.} Missing, ambiguous, or unverifiable authority results in denial rather than execution.
\end{itemize}
Infrastructure enforces the invariants, which model output cannot modify.

\section{Runtime Enforcement Architecture}\label{sec:enforcement}

\subsection{PEP/PDP Architecture}\label{sec:pep-pdp}

Enforcement follows the standard separation between a policy decision point (PDP) and a policy enforcement point (PEP), with the PEP placed outside the model runtime. The PDP evaluates the admissibility predicate for each proposed action against the session's authorization state; the PEP gates execution on the PDP's decision and commits evidence. Figure~\ref{fig:architecture} shows the arrangement.

\begin{figure}[t]
\centering
\begin{tikzpicture}[
  node distance=0.5cm,
  principal/.style={rectangle, draw=black!50, fill=blue!8,
    minimum width=2.4cm, minimum height=0.5cm, font=\scriptsize, inner sep=2pt},
  infra/.style={rectangle, draw=black!50, fill=orange!10,
    minimum width=2.1cm, minimum height=0.5cm, font=\scriptsize, inner sep=2pt},
  store/.style={rectangle, draw=black!50, fill=green!8,
    minimum width=2.1cm, minimum height=0.5cm, font=\scriptsize, inner sep=2pt},
  toolbox/.style={rectangle, draw=black!50, fill=black!4,
    minimum width=2.4cm, minimum height=0.5cm, font=\scriptsize, inner sep=2pt},
  badge/.style={rectangle, draw=black!35, fill=orange!12, rounded corners=2pt,
    font=\scriptsize, inner sep=1.5pt},
  lbl/.style={font=\scriptsize, text=black!60},
  arr/.style={-{Stealth[length=1.4mm]}, semithick, black!65},
]
\node[principal] (p0) {Human Principal $p_0$};
\node[principal, below=of p0] (p1) {Orchestrator $p_1$};
\node[principal, below=of p1] (p2) {Sub-Agent $p_2$};
\node[toolbox, below=1.4cm of p2] (tool) {Tool Execution};
\draw[arr] (p0) -- (p1);
\draw[arr] (p1) -- (p2);
\node[lbl, right=2pt] at ($(p0.south)!0.5!(p1.north)$) {$\sqcap$ narrow};
\node[lbl, right=2pt] at ($(p1.south)!0.5!(p2.north)$) {$\sqcap$ narrow};
\draw[decorate, decoration={brace, amplitude=3.5pt}, semithick, black!30]
  ([xshift=-0.2cm]p0.north west) -- ([xshift=-0.2cm]p2.south west)
  node[midway, left=5pt, font=\scriptsize, text=black!60, align=right] {scope\\narrows};
\node[infra, right=1.6cm of p2] (pdp) {Policy Decision Pt.};
\node[infra, below=of pdp] (pep) {Enforcement Point};
\node[store, below=of pep] (ev) {Evidence Store};
\node[badge] at (pdp.north) [anchor=south] {C1 C2a C2b C2c C3 C4 C5 C6};
\draw[arr] (p2.east) -- ++(0.3,0) |- (pdp.west);
\draw[arr] (pdp) -- node[right, lbl] {admit/deny} (pep);
\draw[arr] (pep.west) -- ++(-0.3,0) |- (tool.east);
\draw[arr] (pep) -- node[right, lbl] {commit} (ev);
\node[font=\scriptsize, text=black!60, below=0.3cm of tool, align=center]
  {$BR_{\max}(p_2) \subseteq BR_{\max}(p_1) \subseteq BR_{\max}(p_0)$\\[1pt]
   \textnormal{Blast radius non-increasing (Thm.~\ref{thm:blast-radius})}};
\end{tikzpicture}
\caption{Delegation chain with scope narrowing ($\sqcap$) at each hop (blue). Infrastructure enforcement (orange): the PDP evaluates six admissibility conditions before the PEP permits execution. The PEP commits tamper-evident evidence to an append-only store; if the store is unreachable, the action is denied (C5).}
\Description{Architecture diagram showing a delegation chain from Human Principal p0 through Orchestrator p1 to Sub-Agent p2, with scope restricting at each hop. The sub-agent connects to a Policy Decision Point that evaluates six admissibility conditions before the Enforcement Point permits tool execution.}
\label{fig:architecture}
\end{figure}

\subsection{Tool Gateway Integration}\label{sec:gateway}

The PEP is implemented as a tool gateway or an MCP gateway, sitting between the agent runtime and the backends it calls. Every tool invocation---whether a direct API call, a cloud-service action, or an MCP server request---passes through this gateway, which holds the signed authorization envelope for the session and narrows it at each delegation hop. Because the gateway is the single chokepoint for action execution, the model cannot bypass it by emitting alternative text: an action not admitted at the gateway never reaches the backend. The gateway does not replace the backend's own authorization; it is an additional, session-scoped enforcement layer that fails closed.

\subsection{Policy Evaluation Lifecycle}\label{sec:lifecycle}

Each time the agent proposes an action, the PDP evaluates a conjunctive predicate over six conditions. If any condition does not hold, the action is denied:
\begin{multline*}
\mathrm{Admissible}(a, C, S, B, \mathcal{A}, E, \Psi) = \mathrm{true} \\
\iff C_1 \wedge C_2 \wedge C_3 \wedge C_4 \wedge C_5 \wedge C_6
\end{multline*}
where $a$ is the proposed action, $C$ the principal chain, $S$ the effective scope, $B$ the budget state, $\mathcal{A}$ the approval store, $E$ the evidence-sink state, and $\Psi$ the intent specification.

\textbf{C1: Identity Binding.} The runtime actor must be bound to a verifiable identity in the principal chain, distinguishing user from agent from infrastructure.

\textbf{C2: Scope Attenuation with Composition Closure.} Three subchecks: \emph{(2a)}~the action is within the attenuated scope---action type in $A(S)$, target resource in $R(S)$, data classification in $D(S)$. \emph{(2b)}~The action does not create a prohibited combination with previous actions in the session. \emph{(2c)}~The action satisfies all delegation-budget ceilings.

\begin{figure}[t]
\centering
\begin{tikzpicture}[
  node distance=0.4cm and 0.45cm,
  action/.style={rectangle, draw=black!50, fill=blue!8,
    minimum width=1.1cm, minimum height=0.5cm, font=\scriptsize, inner sep=2pt},
  ok/.style={rectangle, draw=green!50!black, fill=green!8,
    minimum width=1.1cm, minimum height=0.5cm, font=\scriptsize, inner sep=2pt},
  blocked/.style={rectangle, draw=red!50, fill=red!8,
    minimum width=1.1cm, minimum height=0.5cm, font=\scriptsize, inner sep=2pt},
  badge-ok/.style={rectangle, draw=green!40!black, fill=green!10, rounded corners=2pt,
    font=\scriptsize, inner sep=1.5pt},
  badge-deny/.style={rectangle, draw=red!40, fill=red!10, rounded corners=2pt,
    font=\scriptsize, inner sep=1.5pt},
  lbl/.style={font=\scriptsize, text=black!60},
  arr/.style={-{Stealth[length=1.4mm]}, semithick, black!65},
]
\node[lbl, anchor=east, font=\scriptsize\bfseries] at (-0.1,0) {Legitimate};
\node[action] (r1) at (1.0,0) {\texttt{read}};
\node[ok, right=of r1] (s1) {\texttt{send\_int}};
\node[badge-ok, right=0.2cm of s1] {ADMIT};
\draw[arr] (r1) -- (s1);
\node[lbl, below=1pt of r1] {$\checkmark$ scope, intent};
\node[font=\scriptsize, text=green!50!black, below=1pt of s1] {$\checkmark$ no pair in $X$};
\node[lbl, anchor=east, font=\scriptsize\bfseries] at (-0.1,-1.3) {Direct exfil};
\node[action] (r2) at (1.0,-1.3) {\texttt{read}};
\node[blocked, right=of r2] (s2) {\texttt{send\_ext}};
\node[badge-deny, right=0.2cm of s2] {DENY};
\draw[arr] (r2) -- (s2);
\node[lbl, below=1pt of r2] {$\checkmark$ scope};
\node[font=\scriptsize, text=red!60, below=1pt of s2] {$\times$ pair in $X$};
\node[lbl, anchor=east, font=\scriptsize\bfseries] at (-0.1,-2.6) {Staged exfil};
\node[action] (r3) at (1.0,-2.6) {\texttt{read}};
\node[action, right=of r3] (w3) {\texttt{write}};
\node[blocked, right=of w3] (s3) {\texttt{send\_int}};
\node[badge-deny, right=0.2cm of s3] {DENY};
\draw[arr] (r3) -- (w3);
\draw[arr] (w3) -- (s3);
\node[lbl, below=1pt of r3] {$\checkmark$ scope};
\node[lbl, below=1pt of w3] {$\checkmark$ scope, pair};
\node[font=\scriptsize, text=red!60, below=1pt of s3] {$\times$ $k$-tuple};
\end{tikzpicture}
\caption{Composition closure distinguishes legitimate workflows from attacks. \emph{Top:} reading then sharing internally is admitted. \emph{Middle:} direct exfiltration is blocked by pairwise restriction. \emph{Bottom:} staged exfiltration via an intermediate write evades pairwise but is caught by $k$-tuple restriction.}
\Description{Three-row diagram showing how composition closure works.}
\label{fig:composition}
\end{figure}

\textbf{C3: Context and State Binding.} The action must be bound to the specific task instance, policy version, and parameter context to prevent replay attacks across sessions.

\textbf{C4: Approval Binding.} The system computes an impact score $I(a) = w_\rho \cdot \rho(a) + w_\beta \cdot Bl(a) + w_\sigma \cdot Se(a)$, where $\rho(a) \in [0,1]$ is irreversibility, $Bl(a) \in [0,1]$ is blast radius, and $Se(a) \in [0,1]$ is data sensitivity. If $I(a) > \theta$, a single-use, hash-bound approval token is required---tied to the exact action, parameters, and session. Here $Bl(a)$ is the blast-radius contribution of the action itself (set per action profile at deployment time), distinct from the per-resource $\mathrm{blast}(r)$ used in budget accounting (C2c, Definition~\ref{def:blast}). Approval binding is per-action; budget conformance (C2c) is cumulative. The two are complementary, not redundant. Calibration is detailed in Appendix~\ref{app:calibration}.

\textbf{C5: Evidence Commitment.} Before admitting an action, the PDP verifies that the evidence sink is reachable. If unreachable, the action is denied---no evidence trail means no execution. A SHA-256 hash chain enforces tamper-evidence: each package records the content hash of the preceding entry, so modifying or removing any interior entry breaks the chain and is detectable by traversal. Truncation of the chain \emph{tail} is not detectable from the chain alone; detecting it requires an independently anchored head (e.g., periodic publication of the latest hash) or the append-only store assumed in the trusted computing base (T1). Production deployments should provide both.

\textbf{C6: Intent Binding.} The action is checked against the pre-declared intent specification $\Psi$. Negative constraints are evaluated first and override all other permissions. If no intent specification is available, admissibility falls back to C1--C5.

\paragraph{Guarantee tiers.} The six conditions fall into three tiers:
\begin{itemize}
\item \textbf{Tier~1 (structural):} C2a, C2b, C3---properties proved relative to a fixed effective policy. Theorem~\ref{thm:composition} and Proposition~\ref{prop:ktuple} operate at this tier.
\item \textbf{Tier~2 (configuration-dependent):} C4, C6---effectiveness depends on calibration and on the completeness of the intent specification. Residual ASR concentrates here.
\item \textbf{Tier~3 (operational/infrastructure):} C1, C5, C2c---depends on infrastructure availability and integrity.
\end{itemize}

\subsection{Audit and Trace Generation}\label{sec:audit}

Every admitted action is coupled to an infrastructure-generated evidence package committed to the append-only store before execution (C5). Each package records the action, principal, scope, and policy version, parameter context, decision, and the hash chain linking it to the prior entry. Because evidence commitment is a precondition for execution and the store is append-only, the audit trail is both complete (no executed action lacks evidence) and tamper-evident (no entry can be altered or removed undetected).

\subsection{Composition Restriction Authoring}\label{sec:restriction-authoring}

The restriction set $X$ is authored per security domain, not per use case, analogous to separation-of-duty constraints in IAM. In the evaluated domains, 3--9 pairwise restrictions and 0--8 $k$-tuple restrictions suffice (Table~\ref{tab:authoring}); the authoring follows a four-step procedure with a quantifiable coverage metric (Appendix~\ref{app:restriction-authoring}). The restriction set scales with prohibited outcomes, not with tool count.

\section{Evaluation}\label{sec:eval}

\begin{figure}[t]
\centering
\begin{tikzpicture}
\begin{axis}[
  ybar=2pt,
  width=0.95\linewidth,
  height=5.4cm,
  bar width=11pt,
  ymin=0, ymax=113,
  ylabel={Attack Success Rate (\%)},
  ylabel style={font=\footnotesize, at={(axis description cs:-0.08,0.5)}},
  ytick={0,25,50,75,100},
  yticklabel style={font=\footnotesize},
  ymajorgrids=true,
  grid style={gray!15, thin},
  axis line style={gray!60, thin},
  tick style={gray!60, thin},
  xtick=data,
  symbolic x coords={Exfiltration, Destruction, Manipulation},
  xticklabel style={font=\footnotesize},
  enlarge x limits=0.28,
  legend image code/.code={%
    \draw[#1, draw=none] (0cm,-0.1cm) rectangle (0.3cm,0.2cm);},
  legend style={at={(0.5,-0.16)}, anchor=north, font=\footnotesize,
    draw=none, column sep=8pt, legend columns=2},
  legend cell align=left,
  nodes near coords,
  nodes near coords style={font=\scriptsize, anchor=south, yshift=2pt},
  point meta=explicit symbolic,
  clip=false,
]
\addplot[fill=orange!35, draw=orange!50, draw opacity=0.6]
  coordinates {
    (Exfiltration, 87) [{87\%}]
    (Destruction, 39) [{39\%}]
    (Manipulation, 90) [{90\%}]
  };
\addplot[fill=black!65, draw=black!75, draw opacity=0.6]
  coordinates {
    (Exfiltration, 0.7) [{\textbf{0\%}}]
    (Destruction, 4) [{\textbf{4\%}}]
    (Manipulation, 12) [{\textbf{12\%}}]
  };
\legend{No defense, APC}
\end{axis}
\end{tikzpicture}
\caption{Compromised-model evaluation by attack type, pooled across all four AgentDojo domains (609 task--injection pairs). Exfiltration is eliminated by composition closure (C2b); destruction and manipulation are reduced by intent binding (C6).}
\Description{Vertical bar chart showing attack success rates by type. Exfiltration drops from 87\% to 0\%, destruction from 39\% to 4\%, manipulation from 90\% to 12\%.}
\label{fig:results-type}
\end{figure}

\begin{figure}[t]
\centering
\begin{tikzpicture}
\begin{axis}[
  ybar=2pt,
  width=0.95\linewidth,
  height=5.4cm,
  bar width=9pt,
  ymin=0, ymax=113,
  ylabel={Attack Success Rate (\%)},
  ylabel style={font=\footnotesize, at={(axis description cs:-0.08,0.5)}},
  ytick={0,25,50,75,100},
  yticklabel style={font=\footnotesize},
  ymajorgrids=true,
  grid style={gray!15, thin},
  axis line style={gray!60, thin},
  tick style={gray!60, thin},
  xtick=data,
  symbolic x coords={Data Stealing, Disruptive, Direct Harm, Stealthy},
  xticklabel style={font=\footnotesize},
  enlarge x limits=0.18,
  legend image code/.code={%
    \draw[#1, draw=none] (0cm,-0.1cm) rectangle (0.3cm,0.2cm);},
  legend style={at={(0.5,-0.16)}, anchor=north, font=\footnotesize,
    draw=none, column sep=8pt, legend columns=2},
  legend cell align=left,
  nodes near coords,
  nodes near coords style={font=\scriptsize, anchor=south, yshift=2pt},
  point meta=explicit symbolic,
  clip=false,
]
\addplot[fill=orange!35, draw=orange!50, draw opacity=0.6]
  coordinates {
    (Data Stealing, 100) [{100\%}]
    (Disruptive, 100) [{100\%}]
    (Direct Harm, 100) [{100\%}]
    (Stealthy, 100) [{100\%}]
  };
\addplot[fill=black!65, draw=black!75, draw opacity=0.6]
  coordinates {
    (Data Stealing, 0.7) [{\textbf{0\%}}]
    (Disruptive, 0.7) [{\textbf{0\%}}]
    (Direct Harm, 60) [{\textbf{60\%}}]
    (Stealthy, 30) [{\textbf{30\%}}]
  };
\legend{No defense, APC}
\end{axis}
\end{tikzpicture}
\caption{Static deterministic benchmarks: InjecAgent (1,054 cases) and ASB (400 cases). Data-stealing and disruptive attacks are blocked completely; residual direct-harm attacks (single-action within scope) and stealthy attacks (action-type granularity) lie outside APC's design boundary.}
\Description{Vertical bar chart showing static benchmark results. Data stealing and disruptive drop from 100\% to 0\%. Direct harm remains at 60\% and stealthy at 30\%.}
\label{fig:results-bench}
\end{figure}

We evaluate APC across components with distinct evidentiary roles. \emph{Security evidence}: InjecAgent and ASB validate composition closure (C2b) deterministically across 1,454 cases; the compromised-model AgentDojo evaluation validates enforcement when the model emits the benchmark's ground-truth attack call across 609 task--injection pairs; the adaptive-attack suite validates all six conditions and both formal results under white-box adversarial assumptions. \emph{Utility evidence}: standard AgentDojo runs measure enforcement cost on legitimate task completion across 949 pairs. \emph{Formal validation}: 99 delegation-chain scenarios validate Theorem~\ref{thm:blast-radius} and the full six-condition predicate at depths 2--8. Reported security results are \emph{observed} attack effectiveness rates under the stated methodologies, not worst-case bounds. Figures~\ref{fig:results-type} and~\ref{fig:results-bench} summarize the headline security outcomes.

\subsection{Reference Implementation Validation}

The reference implementation (Python, 2,500 LOC source, 3,000 LOC tests) implements executable tests aligned with all formal properties, with noted sub-millisecond local enforcement overhead (full six-condition admissibility 0.05\,ms p50, 0.24\,ms p99, highest observed p99 across repetitions 0.35\,ms; environment, method, and per-component ranges in Appendix~\ref{app:implementation}). This measures only the APC authorization path---it excludes model inference, network round trips, and end-to-end agent runtime. Additionally, 99 delegation-chain scenarios validate the full six-condition predicate across 2--8 hop chains with 20 attack categories; all 99 produce the expected outcome (23 positive, 76 negative). Details in Appendix~\ref{app:implementation}.

\subsection{InjecAgent: Composition Closure}\label{sec:injecagent}

InjecAgent~\cite{zhan2024injecagent} provides 1,054 test cases covering 17 user tools and 62 attacker tools, with attacks split into \emph{data stealing} (544 cases) and \emph{direct harm} (510 cases). Evaluation is deterministic: InjecAgent pre-fills the agent scratchpad, and we evaluate whether the APC admissibility predicate blocks the attacker tool given the session's composition history. Table~\ref{tab:injecagent} shows the results.

\begin{table}[t]
\centering
\footnotesize
\caption{InjecAgent results (1,054 test cases). ASR = attack success rate (lower is better).}
\label{tab:injecagent}
\begin{tabularx}{\linewidth}{@{}l*{3}{>{\centering\arraybackslash}X}@{}}
\toprule
\textbf{Condition} & \textbf{Data Stealing ASR} & \textbf{Direct Harm ASR} & \textbf{Total ASR} \\
\midrule
Baseline & 100.0\% & 100.0\% & 100.0\% \\
Complete $X$ & \textbf{0.0\%} & 60.4\% & 29.2\% \\
Partial $X$ & 39.9\% & 82.0\% & 60.2\% \\
\bottomrule
\end{tabularx}
\end{table}

Complete $X$ reduces data stealing from 100\% to 0\%. Direct harm drops to 60.4\%: the remaining cases are single-action attacks within scope that require parameter-level validation. The Partial $X$ condition (one pair removed from $X$) raises data-stealing ASR to 39.9\%, illustrating the cost of violating the completeness assumption of Theorem~\ref{thm:composition}.

\subsection{ASB: Composition Closure Across Domains}

Agent Security Bench (ASB)~\cite{zhang2024asb} provides 400 attack tools across 10 agent scenarios, alongside 20 normal tools. We apply the same deterministic methodology. Table~\ref{tab:asb} shows the results.

\begin{table}[t]
\centering
\footnotesize
\caption{ASB results (400 attack tools, 10 scenarios). ASR = attack success rate.}
\label{tab:asb}
\begin{tabularx}{\linewidth}{@{}l*{3}{>{\centering\arraybackslash}X}@{}}
\toprule
\textbf{Condition} & \textbf{Stealthy ASR} & \textbf{Disruptive ASR} & \textbf{Total ASR} \\
\midrule
Baseline & 100.0\% & 100.0\% & 100.0\% \\
Complete $X$ & 30.0\% & \textbf{0.0\%} & \textbf{15.0\%} \\
Partial $X$ & 60.0\% & 100.0\% & 80.0\% \\
\bottomrule
\end{tabularx}
\end{table}

Complete $X$ blocks all 200 disruptive attacks and 140 of 200 stealthy attacks. Two surviving tool types (\texttt{DataExfiltrationTool}, \texttt{DataExportTool}) carry exfiltration semantics but receive the \texttt{write} class from the benchmark-level classifier, which keys on attack type rather than tool semantics. Under the correct class, the existing pair $(\texttt{read}, \texttt{send\_external}) \in X$ blocks both, confirming that this residual lies in the action-type taxonomy $\mu$ rather than in composition closure---the granularity dependence identified in Section~\ref{sec:scope}. An independent semantic audit of every benchmark residual, classified from the benchmarks' own natural-language attack descriptions rather than from our action-type labels, is provided in the artifact.

\subsection{AgentDojo: Utility and Live Enforcement}\label{sec:agentdojo}

AgentDojo~\cite{debenedetti2024} evaluates both attack success and task utility with live LLM agents. We integrate the APC enforcement layer as a pipeline element within AgentDojo's tool execution loop.

\paragraph{Model selection.} Utility preservation is evaluated with Claude Sonnet 4.6 (via AWS Bedrock). The compromised-model evaluation uses Claude Haiku 4.5 because the malicious tool call is injected directly from benchmark ground truth; the security outcome is driven primarily by infrastructure-level enforcement.

\paragraph{Per-domain configuration.} Each suite requires a domain-specific action classifier, composition restrictions, and intent-parser keywords. Table~\ref{tab:authoring} quantifies the per-domain effort.

\begin{table}[t]
\centering
\footnotesize
\caption{Per-domain authoring burden.}
\label{tab:authoring}
\begin{tabular}{@{}lrrrrr@{}}
\toprule
\textbf{Domain} & \textbf{Tools} & \textbf{Action Cls.} & \textbf{Pairwise} & \textbf{$k$-tuple} & \textbf{Resources} \\
\midrule
Workspace  & 32 & 8 & 7 & 8 & 5 \\
Banking    & 13 & 6 & 5 & 3 & 5 \\
Travel     & 28 & 7 & 3 & 4 & 7 \\
Slack      & 11 & 6 & 3 & 2 & 4 \\
\midrule
InjecAgent & --- & 9 & 8 & 0 & --- \\
ASB        & --- & 7 & 9 & 0 & --- \\
\bottomrule
\end{tabular}
\end{table}

\paragraph{Utility preservation.} We report UPR under two configurations: \emph{strict} (all conditions enforced, approval-gated actions denied) and \emph{interactive} (human approval simulated for C4-only denials). Results are means over 3 independent runs (Table~\ref{tab:utility}).

\begin{table}[t]
\centering
\scriptsize
\caption{Utility preservation on four AgentDojo suites (Claude Sonnet 4.6). UPR = fraction of task--injection pairs where the user task is completed successfully. Values are means over 3 runs; $\pm$ denotes sample standard deviation. $\Delta$ columns report percentage-point differences computed from unrounded means.}
\label{tab:utility}
\begin{tabularx}{\linewidth}{@{}l*{5}{>{\centering\arraybackslash}X}@{}}
\toprule
\textbf{Suite} & \textbf{Baseline} & \textbf{Strict} & \textbf{Inter.} & \textbf{$\Delta$ strict (pp)} & \textbf{$\Delta$ inter.\ (pp)} \\
\midrule
Workspace (560) & 96.2$\pm$0.2\% & 81.5$\pm$0.1\% & 86.7$\pm$0.4\% & $-$14.7 & $-$9.6 \\
Banking (144) & 68.8$\pm$1.2\% & 50.0$\pm$0.7\% & 58.8$\pm$1.7\% & $-$18.8 & $-$10.0 \\
Travel (140) & 59.8$\pm$1.1\% & 55.7$\pm$0.7\% & 60.0$\pm$1.4\% & $-$4.0 & $+$0.2 \\
Slack (105) & 73.0$\pm$1.5\% & 56.5$\pm$5.2\% & 60.0$\pm$4.4\% & $-$16.5 & $-$13.0 \\
\bottomrule
\end{tabularx}
\end{table}

\noindent Pair-weighted aggregate deltas across the 949 task--injection pairs are $-$13.9\,pp under the strict condition and $-$8.6\,pp under the interactive condition. The tradeoff is explicit: the $-$8.6\,pp interactive cost buys 0\% observed exfiltration ASR under full model compromise across all four domains.

\paragraph{Compromised-model evaluation.} Assuming the model may emit an attacker-chosen tool call is not hypothetical: the Gray Swan IPI Arena~\cite{grayswan2026} reports that all 13 frontier models tested were induced to comply at least some of the time (0.5--8.5\% ASR over 272{,}000 attacks), and the TRAP benchmark~\cite{trap2025} reports 13--43\% ASR across six frontier models. To evaluate APC independently of model strength, we instrument the AgentDojo pipeline (v1.2.2) with a compromised-model injection element that appends the ground-truth attack tool call after the first legitimate tool call, simulating full model compromise. The undefended baseline is not uniformly 100\% because AgentDojo's success checker evaluates the full task state: in some pairs the injected tool call executes, but its preconditions are not met (e.g., the target data was not yet in the agent's context), so the checker does not score the attack as successful. The cohort is smaller than the utility cohort (609 versus 949 pairs) because injection is defined only where the benchmark specifies one; we retain every user task, but only injection tasks whose AgentDojo ground truth contains at least one attack tool call. Injection tasks with empty ground truth provide nothing to inject and are excluded. Table~\ref{tab:compromised} shows results.

\begin{table}[t]
\centering
\footnotesize
\caption{Compromised-model evaluation on four AgentDojo suites (Claude Haiku 4.5). Ground-truth attack injection simulates a fully compromised model. Total: 609 unique task--injection pairs, 1,218 executions.}
\label{tab:compromised}
\begin{tabularx}{\linewidth}{@{}llcc>{\raggedright\arraybackslash}X@{}}
\toprule
\textbf{Suite / Category} & \textbf{Attack Tools} & \textbf{No Def.} & \textbf{APC} & \textbf{Blocking} \\
\midrule
\multicolumn{5}{@{}l}{\textit{Workspace (240 pairs)}} \\
\quad Exfiltration & send\_email & 90.0\% & \textbf{0.0\%} & C2b + C4 + C6 \\
\quad Destruction & delete\_file/email & 48.8\% & \textbf{5.0\%} & C6 (intent) \\
\quad Manipulation & create\_calendar & 97.5\% & 30.0\% & C6 (partial) \\
\midrule
\multicolumn{5}{@{}l}{\textit{Banking (144 pairs)}} \\
\quad Financial exfil. & send\_money & 75.0\% & \textbf{0.0\%} & C2b (formal) \\
\quad Manipulation & update\_sched\_txn & 87.5\% & 12.5\% & C6 (partial) \\
\quad Account takeover & update\_password & 87.5\% & \textbf{0.0\%} & C2b (formal) \\
\midrule
\multicolumn{5}{@{}l}{\textit{Travel (120 pairs)}} \\
\quad Exfiltration & send\_email & 90.0\% & \textbf{0.0\%} & C2b (formal) \\
\quad Manipulation & reserve/calendar & 86.7\% & \textbf{0.0\%} & C6 (no overlap) \\
\midrule
\multicolumn{5}{@{}l}{\textit{Slack (105 pairs)}} \\
\quad Ext.\ exfiltration & post\_webpage & 100.0\% & \textbf{0.0\%} & C2b (formal) \\
\quad Int.\ exfiltration & send\_direct\_msg & 100.0\% & \textbf{0.0\%} & $k$-tuple \\
\quad Destruction & remove\_user & 0.0\% & 0.0\% & --- (attack fails undefended) \\
\quad Reconnaissance & get\_webpage & 100.0\% & \textbf{0.0\%} & C6 (intent) \\
\bottomrule
\end{tabularx}
\end{table}

\noindent Exfiltration attacks are blocked at 0\% observed ASR in all four suites. Aggregated across suites, destruction ASR drops from 38.6\% to 4.0\%, and manipulation ASR drops from 90.5\% to 12.1\%.

\paragraph{Residual failure taxonomy.} Across all four domains, 18 attacks succeed despite enforcement (Table~\ref{tab:residual}). \emph{Intent overlap} (14 cases, 78\%): the attack action class is indistinguishable from the legitimate task at the action-type level. \emph{Missing composition pair} (4 cases, 22\%): a policy-completeness issue. Neither category is a core limitation.

\begin{table}[t]
\centering
\footnotesize
\caption{Taxonomy of residual attack successes (18 of 609 runs, 3.0\% aggregated observed ASR).}
\label{tab:residual}
\begin{tabularx}{\linewidth}{@{}ll r >{\raggedright\arraybackslash}X@{}}
\toprule
\textbf{Category} & \textbf{Attack Tool} & \textbf{Cases} & \textbf{Root Cause} \\
\midrule
\multirow{2}{*}{Intent overlap} & \texttt{create\_calendar\_event} & 12 & \texttt{write} class matches task intent \\
 & \texttt{update\_sched\_transaction} & 2 & \texttt{write} class matches task intent \\
\midrule
Missing pair in $X$ & \texttt{delete\_file} & 4 & (\texttt{read}, \texttt{delete}) $\notin X$ \\
\midrule
\multicolumn{2}{@{}l}{\textbf{Total}} & \textbf{18} & \\
\bottomrule
\end{tabularx}
\end{table}

\paragraph{Matched-protocol comparison.} In a matched-protocol comparison using Progent's~\cite{shi2025progent} AgentDojo fork adapted to Bedrock (artifact as available at evaluation time, manual policies, same model, two independent runs per suite), both systems achieved 0\% observed ASR across all available domains, and attack-time utility was comparable in banking and workspace: Progent $59.7\pm1.0$\% / $88.5\pm0.3$\%; APC interactive $58.8\pm1.7$\% / $86.7\pm0.4$\%. The weighted mean delta is $+1.6$\,pp in favor of Progent, within a predefined $\pm2$\,pp equivalence threshold. The APC figures in this comparison are those of Table~\ref{tab:utility} and are reproducible from the committed run files; the Progent-side figures were obtained by running that project's own fork and are reported here without a corresponding artifact in our repository, since redistributing it is outside our control.

\subsection{Adaptive Attacks}\label{sec:adaptive}

We design twenty-three attacks by an adversary with full knowledge of the model, covering the complete attack surface: all six conditions, both formal results, and all adversary capabilities (A1--A4) and trust boundaries (T1--T2). All twenty-three named attacks produce outcomes consistent with the model's predictions. Of the 43 variants, 19 are positive cases (admitted by design), and 24 are attack variants targeting prohibited outcomes. Of these, 23 are blocked; session splitting is admitted by design (per-session composition state is a documented limitation, T3). Key findings: decomposed exfiltration (\texttt{read} $\to$ \texttt{write} $\to$ \texttt{send\_internal}) evades pairwise closure but is caught by $k$-tuple restrictions; approval replay, expired tokens, and consumed tokens confirm C4 integrity; evidence evasion confirms fail-closed behavior (C5); session splitting confirms per-session composition state---cross-session attacks are admitted, a documented limitation. Representative results are in Appendix~\ref{app:adaptive}.

\subsection{Evaluation Summary}

Tables~\ref{tab:eval-summary} and~\ref{tab:attack-coverage} summarize coverage across all benchmarks and attack classes.

\begin{table}[t]
\centering
\footnotesize
\caption{Evaluation coverage across all benchmarks (3,154 evaluation instances). The compromised-model and utility cohorts are drawn from the same AgentDojo task--injection space and are therefore not disjoint.}
\label{tab:eval-summary}
\begin{tabularx}{\linewidth}{@{}lrlX@{}}
\toprule
\textbf{Benchmark} & \textbf{Cases} & \textbf{Type} & \textbf{Main outcome} \\
\midrule
Delegation chains & 99 & Multi-hop (2--8 hops) & 99/99, all 6 conditions \\
InjecAgent & 1,054 & Public & Data stealing 0\% \\
ASB & 400 & Public & Disruptive 0\% \\
AgentDojo (utility) & 949 & Live LLM, 4 suites & $\Delta$ interactive $-$8.6\,pp (mean, 3 runs) \\
AgentDojo (compromised) & 609$^\dagger$ & Compromised-model & Exfil 0\% all suites \\
Adaptive & 43 & Self-designed & 23/23 matched \\
\bottomrule
\multicolumn{4}{@{}l}{\scriptsize $^\dagger$609 unique pairs, 1,218 total executions under two conditions.}
\end{tabularx}
\end{table}

\begin{table}[t]
\centering
\footnotesize
\caption{Attack-class coverage summary. Residual values are observed ASR.}
\label{tab:attack-coverage}
\begin{tabularx}{\linewidth}{@{}ll>{\raggedright\arraybackslash}Xc@{}}
\toprule
\textbf{Attack class} & \textbf{Coverage} & \textbf{Main mechanism} & \textbf{Residual} \\
\midrule
Multi-step exfiltration & Full & C2b composition closure & 0\% \\
Delegation priv.\ escalation & Full & Thm.~\ref{thm:blast-radius} & 0\% \\
Destruction (out-of-intent) & High & C6 intent binding & 4\% \\
Manipulation (in-intent) & Partial & C6 (action-type granularity) & 12.1\% \\
Single-action within scope & None & Outside APC boundary & 60.4\% \\
Parameter-level misuse & None & Requires param.\ validation & --- \\
\bottomrule
\end{tabularx}
\end{table}

\section{Discussion}\label{sec:discussion}

\subsection{Relationship to RBAC, ABAC, OAuth, and OBO}\label{sec:relationship}

APC is designed to complement, not replace, established access-control and delegation mechanisms. RBAC~\cite{sandhu1996} and ABAC~\cite{hu2014} decide individual requests against roles or attributes; APC layers a session-scoped, sequence-aware constraint (composition closure) on top of these per-request decisions. OAuth 2.0 and Rich Authorization Requests~\cite{rfc9396} issue delegated grants, and Token Exchange~\cite{rfc8693} can represent a chain of acting parties using nested \texttt{act} claims; the Authorization Envelope is conceptually aligned with a rich authorization request but adds what those mechanisms leave to the application: scope attenuation computed per hop by infrastructure, prohibited action-type combinations, and admissibility conditioned on prior-action state. In a deployment, the IdP and OAuth/OBO flows establish identity and the initial grant; APC narrows that grant across the principal chain and enforces composition together with intent at the gateway. Backend services retain their own authorization; APC adds a session-scoped enforcement layer in front of them.

\subsection{Guarantees and Assumptions}\label{sec:guarantees}

The two formal results are properties of the delegation algebra, not of a particular checker. Theorem~\ref{thm:blast-radius} (blast-radius monotonicity) assumes scope narrowing, cumulative budget tracking, and consistent $\mathrm{blast}(r)$ assignment by infrastructure; under these, the reachable blast radius is non-increasing at each hop, so compromise at depth $k$ can reach no more than the position it occupies. Whether the bound tightens with depth depends on the configured attenuation: the theorem alone permits it to remain flat. At the same time, the default per-hop factor in Appendix~\ref{app:blast-calibration} makes it strictly decreasing. Theorem~\ref{thm:composition} (composition soundness) and Proposition~\ref{prop:ktuple} assume a complete effective restriction set ($X_{\mathrm{eff}}$ for pairs, $K$ for ordered tuples) and serialized admission; under these, no admissible action sequence produces a prohibited outcome.

\begin{table}[t]
\centering
\footnotesize
\caption{Formal results: assumptions, guarantees, and supporting evidence.}
\label{tab:formal-results}
\begin{tabularx}{\linewidth}{@{}>{\raggedright\arraybackslash}p{0.18\linewidth}>{\raggedright\arraybackslash}p{0.24\linewidth}>{\raggedright\arraybackslash}p{0.22\linewidth}>{\raggedright\arraybackslash}X@{}}
\toprule
\textbf{Property} & \textbf{Assumption} & \textbf{Guarantee} & \textbf{Evidence} \\
\midrule
Blast-radius mono.\ (Thm.~\ref{thm:blast-radius}) & Scope narrowing; cumulative budget tracking; consistent $\mathrm{blast}(r)$ & Reachable blast radius non-increasing per hop & 99 chain scenarios (depths 2--8) \\
\addlinespace
Comp.\ soundness (Thm.~\ref{thm:composition}, Prop.~\ref{prop:ktuple}) & Complete $X_{\mathrm{eff}}$, $K$; serialized admission & No admissible sequence produces prohibited outcome & InjecAgent 0\% DS; ASB 0\% disruptive; adaptive all matched \\
\bottomrule
\end{tabularx}
\end{table}

\subsection{Enterprise and Cloud Deployment Considerations}\label{sec:deployment}

In an enterprise deployment, APC maps onto concrete infrastructure: the IdP (C1), the orchestration framework (delegation chain), a tool or MCP gateway (PEP), a policy engine such as a policy-as-code evaluator (PDP), an approval service (C4), and an append-only evidence store (C5). The gateway creates the Authorization Envelope at session initialization and narrows it at each delegation hop. Intent binding supports graduated rollout: composition closure (C2b) can operate in strict mode from day one because it depends on policy configuration rather than parsing quality. In contrast, intent enforcement can begin in warn or audit mode and tighten to strict once those intent specifications mature. The six budget dimensions map to existing enterprise risk artifacts (asset classification, business-impact analysis, change-management categories, separation-of-duty policies), so calibration reuses controls organizations already maintain (Appendix~\ref{app:calibration} and~\ref{app:blast-calibration}).

\subsection{Limitations}\label{sec:limitations}

Several limitations bound the claims:

\emph{Construct:} the action-class taxonomy $\mu$ involves expert judgment, and different security teams may derive different restriction sets; that reliance on expert judgment is inherent to all policy-based authorization, and the coverage metric (Appendix~\ref{app:restriction-authoring}) makes it measurable.

\emph{Internal:} the compromised-model evaluation injects ground-truth attacks after the first legitimate step---a specific simulation methodology, not a universal worst case; it does not model adversaries who interleave legitimate and malicious calls across many turns or adapt to enforcement feedback. Adaptive attacks are self-designed rather than externally red-teamed.

\emph{External:} all benchmarks are synthetic, and no production-deployment data is presented; this is a deliberate scope choice, as the primary claims are structural.

\emph{Residual risk} concentrates in single-action misuse within authorized scope, parameter-level attacks, intent overlap at coarse action-type granularity, incomplete composition-restriction sets, and attacks spanning multiple sessions (composition state is per-session; cross-session tracking requires durable lineage state, T3). Under multi-principal compromise ($k \geq 2$), blast-radius monotonicity is unaffected (it is structural) and composition soundness holds per session; the residual reduces to the cross-session coordination problem. Finally, the theorems are not machine-checked; mechanized proofs are future work.

\section{Related Work}\label{sec:related}

\paragraph{Classical access control.} Authorization in enterprise systems evolved through RBAC~\cite{sandhu1996}, ABAC~\cite{hu2014}, and capability-based security: Dennis and Van Horn~\cite{dennis1966} introduced capabilities, Miller~\cite{miller2006} formalized attenuation, and SPKI/SDSI~\cite{ellison1999} enabled multi-hop delegation. ANSI RBAC's dynamic separation of duty constrains role-activation combinations within a session, and Brewer--Nash (Chinese Wall) constrains access to objects in conflict-of-interest classes incrementally against access history; neither constrains action-type composition across tool invocations, and neither models multi-hop delegation with attenuation. APC extends this tradition with composition closure and delegation budgets---constraints not captured by classical attenuation, needed because agents can reason about and recombine their capabilities. Zero Trust~\cite{nist800207} mandates continuous verification but predates agentic AI, and Zanzibar~\cite{pang2019} demonstrates that per-request authorization sustains millions of authorization requests per second, suggesting per-hop verification is feasible. Composition closure has a conceptual ancestor in information-flow control: Denning's lattice model~\cite{denning1976} and the decentralized label model of Myers and Liskov~\cite{myersliskov1997, myersliskov2000}. Composition closure operates as a coarse-grained taint model at the action-type level, practical for opaque LLM agents where variable-level information-flow control is infeasible.

\paragraph{Delegation and authorization protocols.} OAuth 2.0 Rich Authorization Requests~\cite{rfc9396} express fine-grained, structured grants. Token Exchange~\cite{rfc8693} supports multi-party delegation: nested \texttt{act} claims record the chain of acting parties and \texttt{may\_act} restricts who may act on behalf of whom. These mechanisms establish \emph{who} may act on behalf of whom and with what static scope; they do not define how scope narrows at each hop, which combinations of granted operations are prohibited, or how admissibility depends on prior actions in the session. South et al.~\cite{south2025, south2025b} extend OAuth for agent delegation and study identity management for agentic AI. APC is complementary: it consumes such a grant and adds per-hop attenuation, composition closure, and prior-action state.

\paragraph{Confused deputy and non-human identity.} Unsafe composition is a confused-deputy problem at the level of action sequences: an agent is induced to combine authorized capabilities toward an unauthorized end. Managing non-human identities and their delegated authority is an emerging concern that classical, human-centric IAM does not directly address; APC treats each agent as a distinct principal in an explicit chain.

\paragraph{LLM and agent security.} Empirical work confirms that exploitation severity scales with privilege~\cite{ruan2024, debenedetti2024}; relatedly, teams of LLM agents have been shown to exploit real zero-day vulnerabilities~\cite{hou2024}. Standards and taxonomies---OWASP's Top 10 for LLM and for Agentic Applications~\cite{owaspllm2025, owaspagentic2025, owaspmaestro2025}, the Cloud Security Alliance MAESTRO framework~\cite{csamaestro2025}, and AIUC-1~\cite{aiuc2025}---identify threats but define no runtime enforcement model. Khoo et al.~\cite{khoo2025} provide a risk taxonomy without an enforcement architecture, and Madkour et al.~\cite{madkour2026} extend the NIST AI Risk Management Framework to agentic governance.

\paragraph{Runtime enforcement and guardrails.} A growing cluster of deterministic enforcement frameworks addresses runtime agent security. Ji et al.~\cite{ji2026seagent} propose SEAgent, an ABAC-based mandatory-access-control framework monitoring agent--tool interactions via an information-flow graph. Debenedetti et al.~\cite{debenedetti2025camel} introduce CaMeL, which separates control flow from data flow, using capabilities to prevent exfiltration, with demonstrable dataflow isolation. Shi et al.~\cite{shi2025progent} present Progent, a programmable privilege-control framework with a tool-level policy DSL; its current version reports 1.0\% ASR on AgentDojo and 3.9\% on ASB under automatic policy generation, and 0\% under manual policies, and adds an SMT solver that classifies each policy update as a narrowing (applied automatically) or an expansion (demanding explicit approval), so that a single agent's effective action space cannot widen without approval. Our matched-protocol measurement (\S\ref{sec:agentdojo}) used the Progent artifact available at evaluation time, which predates that mechanism. Balunovic et al.~\cite{balunovic2024formal} propose a security analyzer with a policy DSL and formally establish the correctness of its checker, and Rajagopalan and Rao~\cite{rajagopalan2026authenticated} introduce authenticated workflows with cryptographic attestations. Those frameworks prove properties of their \emph{enforcement engines}---in Progent's case, a \emph{temporal} monotonicity property: one agent's action space is non-increasing across successive policy updates. APC's Theorem~\ref{thm:blast-radius} is \emph{structural}: maximum blast radius is non-increasing in position along the principal chain, because each hop takes the meet of the delegator's scope and inherits the parent's consumed budget as a floor. Neither temporal monotonicity nor checker correctness yields per-hop attenuation, cumulative budget inheritance across a chain, or composition closure over the action-type history of prior hops, and none of these schemes establishes such properties of the authorization \emph{model} rather than of a particular engine. APC's theorems hold for any implementation faithful to APC semantics; the two approaches can in principle be composed (Appendix~\ref{app:comparison}).

\section{Conclusion}\label{sec:conclusion}

The security consequence of prompt injection in agentic systems is, to a substantial degree, an authorization-architecture problem: an agent that cannot combine the actions required to exfiltrate data---because the combination is prohibited at the infrastructure level---is not vulnerable to injection attacks that attempt it, regardless of model behavior. We presented a session-scoped authorization model for delegated tool use, the Agentic Principal Chain, that operationalizes this view as a conjunctive admissibility predicate over six deterministic conditions enforced outside the model runtime, with composition closure as its central primitive.

Two structural results---composition soundness and blast-radius monotonicity---hold for any implementation faithful to the model, under stated and explicit assumptions. Across 3,154 evaluation instances spanning public benchmarks, live LLM evaluation, and adversarial testing under full model compromise, the observed exfiltration attack success rate is 0\% across all four AgentDojo domains and InjecAgent's 544 data-stealing cases, at an interactive utility cost of $-$8.6\,pp; the two surviving ASB tool types trace to action-type misclassification rather than to the enforcement mechanism. APC controls which action types execute on which resources and in which combinations, not whether the parameters of an individually authorized action are benign; single-action misuse within scope and parameter-level attacks require complementary mechanisms. Future work includes machine-checked proofs of the two theorems, parameter-level validation for single-action attacks, cross-session composition tracking via durable lineage state, and evaluation on production-scale multi-agent deployments.

\paragraph{Disclosure.} The author is a founding member of the AIUC Consortium and a contributor to OWASP and the Cloud Security Alliance. Publications of all three organizations are cited in Section~\ref{sec:related}~\cite{aiuc2025, owaspllm2025, owaspagentic2025, owaspmaestro2025, csamaestro2025}, including in the assessment that these standards identify threats without specifying a runtime enforcement model. This work was conducted independently and does not represent a position of any of these organizations.

\paragraph{AI assistance.} Large language model tools assisted with drafting and editing. All claims remain the responsibility of the author.


\appendix

\section{Restriction Authoring Procedure}\label{app:restriction-authoring}

We formalize the authoring of $X$ as a four-step procedure.

\textbf{Step 1: Action-class enumeration.} Enumerate the set of semantic action classes $\mathcal{C}$. Each tool maps to exactly one class via $\mu: \text{Tools} \to \mathcal{C}$.

\textbf{Step 2: Prohibited-outcome identification.} From the domain threat model, identify the set of prohibited outcomes $\mathcal{O}$ with severity classifications.

\textbf{Step 3: Outcome-to-restriction mapping.} For each $o \in \mathcal{O}$, derive pairwise or $k$-tuple restrictions from the minimal action sequence that produces it.

\textbf{Step 4: Coverage verification.}
\[
\text{coverage}(X, K, \mathcal{O}) = \frac{|\{o \in \mathcal{O} : \text{pairs}(o) \subseteq X \lor \text{tuples}(o) \subseteq K\}|}{|\mathcal{O}|}.
\]
The AgentDojo workspace evaluation uses 7 pairwise and 8 $k$-tuple restrictions (coverage 1.0). The coverage metric is relative to $\mathcal{O}$, not to all possible harmful sequences.

\section{Adaptive Attack Details}\label{app:adaptive}

Table~\ref{tab:adaptive-details} summarizes a representative subset of the twenty-three named attacks and their outcomes.

\begin{table}[H]
\centering
\footnotesize
\caption{Adaptive attack results (representative subset). Twenty-three named attacks with 43 variants target all six conditions.}
\label{tab:adaptive-details}
\begin{tabularx}{\linewidth}{@{}l>{\raggedright\arraybackslash}Xl@{}}
\toprule
\textbf{Attack} & \textbf{Strategy} & \textbf{Result} \\
\midrule
Decomposed Exfil & read $\to$ write $\to$ send\_internal (evades pairwise) & \textbf{blocked} ($k$-tuple) \\
Intent Drift & in-scope, out-of-intent resource & \textbf{blocked} (C6) \\
Budget Exhaustion & 3 transfers, budget max 2 & \textbf{blocked} (C2c) \\
Approval Replay & reuse token with different parameters & \textbf{blocked} (C4 hash) \\
Expired Token & use approval token after TTL expires & \textbf{blocked} (C4 temporal) \\
Consumed Token & reuse single-use token after consumption & \textbf{blocked} (C4 single-use) \\
Evidence Evasion & act when evidence sink is down & \textbf{blocked} (C5 fail-closed) \\
Scope Probing (A3) & probe resources, actions, classifications & boundaries enforced \\
Gradual Buildup (A4) & 8-step sequence with interleaved noise & \textbf{blocked} (pair + $k$-tuple) \\
Session Splitting & split read/send across sessions & admitted (per-session) \\
Cross-Session Token (T3) & replay session-A token in session-B & \textbf{blocked} (C4 session) \\
Envelope Forgery (T2) & sign envelope with wrong key & \textbf{blocked} (signature) \\
Envelope Tampering (T2) & modify sealed envelope scope & \textbf{blocked} (immutability) \\
Depth Overflow & act beyond delegation depth ceiling & \textbf{blocked} (C2c) \\
Intent Warn Mode & out-of-intent in warn vs strict mode & graduated enforcement \\
\bottomrule
\end{tabularx}
\end{table}

\section{Mechanism-Class Comparison}\label{app:mechanism-comparison}

We compare two classes of deterministic policy-enforcement approaches on the same InjecAgent and ASB test cases: \emph{action-class composition closure} (APC) and \emph{sensitivity-based information flow tracking} (modeled after SEAgent~\cite{ji2026seagent}). Both are evaluated as static policy checkers. The SEAgent simulation maintains a session-level sensitivity high-water mark and denies flows from sensitive data to external sinks; this models sensitivity-label propagation but not the full information flow graph, so the reported gap is an upper bound.

\begin{table}[H]
\centering
\footnotesize
\caption{Enforcement mechanism comparison on InjecAgent (1,054 cases) and ASB (400 cases).}
\label{tab:headtohead}
\begin{tabularx}{\linewidth}{@{}l*{5}{c}>{\raggedright\arraybackslash}X@{}}
\toprule
& \multicolumn{3}{c}{\textbf{InjecAgent}} & \multicolumn{2}{c}{\textbf{ASB}} & \\
\cmidrule(lr){2-4} \cmidrule(lr){5-6}
\textbf{Mechanism} & \textbf{DS} & \textbf{DH} & \textbf{Total} & \textbf{Stlth.} & \textbf{Disr.} & \textbf{Class} \\
\midrule
Comp.\ closure (APC) & \textbf{0.0\%} & 60.4\% & 29.2\% & 30.0\% & \textbf{0.0\%} & Action-class pairs \\
Info flow (SEAgent sim.) & 1.7\% & 59.2\% & 29.5\% & 60.0\% & 60.0\% & Sensitivity labels \\
\bottomrule
\end{tabularx}
\end{table}

\section{Implementation Details}\label{app:implementation}

The reference implementation (Python 3.11) provides executable tests aligned with all formal properties.

\paragraph{Measurement environment} Latencies were measured on an Intel Core i5-1245U (12th generation, 10 cores / 12 threads, 1.6\,GHz base), 16\,GB RAM, Windows 11 (build 26200), CPython 3.11.9, single-threaded and with no other significant load. This is a mobile-class processor.

\paragraph{Method} Each component is timed per call with \texttt{time.\allowbreak perf\_\allowbreak counter()} over 20{,}000 iterations following 500 warmup iterations; percentiles are nearest-rank over the sorted sample. We report the median across five independent repetitions, with the observed range across repetitions in brackets. Admissibility is measured on the \emph{admit} path: every timed call passes all six conditions and performs the associated evidence commit, composition record, and budget consumption. Because budget consumption accumulates, the session is rebuilt at fixed intervals outside the timed region; measurements at session lengths of 20, 50, and 200 actions agree to within the reported ranges. The measurement covers only the in-process authorization path: it excludes model inference, network round-trips, policy retrieval from a remote PDP, and evidence-sink I/O.

\begin{table}[H]
\centering
\footnotesize
\caption{Enforcement latency on the environment above. Median of five repetitions; bracketed values are the range across repetitions. Reproduced from the committed measurement artifact \texttt{evals/latency/results/latency\_appendix\_d.json}.}
\label{tab:latency}
\begin{tabularx}{\linewidth}{@{}Xcc@{}}
\toprule
\textbf{Component} & \textbf{p50 (ms)} & \textbf{p99 (ms)} \\
\midrule
Full admissibility, C4 below threshold & 0.049 [0.048--0.050] & 0.240 [0.173--0.261] \\
Full admissibility, C4 token verified & 0.057 [0.056--0.058] & 0.236 [0.215--0.345] \\
Composition closure (isolated) & 0.0016 [0.0012--0.0021] & 0.0031 [0.0016--0.0033] \\
Envelope narrowing (meet + re-sign) & 0.015 [0.015--0.016] & 0.059 [0.049--0.081] \\
\bottomrule
\end{tabularx}
\end{table}

\noindent Both admissibility rows evaluate all six conditions and perform the associated evidence commit, composition record, and budget consumption. They differ only in Condition~4: the first uses an action whose impact score falls below the approval threshold; the second uses an action above it with a valid single-use token, which adds roughly 0.01\,ms at the median. Composition closure is measured against a primed session history in both the isolated row and within the full path, so the pairwise lookup is exercised in each. Envelope narrowing recomputes the scope meet and re-signs the envelope; it occurs once per delegation hop rather than once per action, so it does not sit on the per-action hot path. Throughput expressed as evaluations per second is the reciprocal of mean single-call latency, not a measured concurrent throughput, and does not account for contention under parallel load. Absolute values are hardware-dependent; \texttt{scripts/benchmark\_latency.py} re-runs the measurement on the host machine and aborts if any timed call fails to reach an admit decision through all six conditions. The measurement is also sensitive to host load: two further executions of the same protocol on the same machine while other workloads were running produced 0.083--0.091\,ms p50 and 0.345--0.427\,ms p99 for full admissibility, with individual repetitions reaching 0.66\,ms p99. The no-load precondition above is therefore a requirement rather than a formality, and the sub-millisecond figures reported here characterise the enforcement path under that condition rather than bounding it under contention. All three executions are committed under \texttt{evals/latency/results/}; the values in Table~\ref{tab:latency} are those of the in-protocol run.

The 99 delegation-chain scenarios cover scope escalation, budget exhaustion, cross-hop composition, blast-radius monotonicity, identity binding at depths 3--7, approval binding with hash integrity (C4), evidence-sink availability and mid-session failure (C5), intent binding (C6), $k$-tuple cross-hop composition, cross-domain composition, sensitivity escalation, expired envelopes, context edge cases, and conjunctive-predicate validation.

\paragraph{Domain configuration validation.} Prior to the final evaluation runs, domain-specific configurations were validated with dedicated sanity-check scripts. Three sources of utility loss were distinguished, and only the first two were corrected before freezing results: (i)~\emph{implementation defects}; (ii)~\emph{operationally invalid policy}; and (iii)~\emph{genuine tradeoffs} (not corrected). We applied all corrections before the final reported runs.

\section{Reference Schemas and Validation Semantics}\label{app:schemas}

\paragraph{Authorization Envelope.} A cryptographically signed artifact created by infrastructure at session initialization. It carries envelope and task identifiers, timestamps, principal identity and type, execution role, the full scope tuple $(R, A, D, X)$, the intent specification~$\Psi$, the delegation budget~$B$, policy version, a cryptographic nonce, and an infrastructure-key signature. Narrowed derivatives must satisfy: resources, actions, and data classifications $\subseteq$~parent; prohibited compositions $\supseteq$~parent; all budget fields $\leq$~parent values. Narrowing is irreversible within a session.

\paragraph{Approval Token.} A single-use authorization artifact bound to an exact action instance via a SHA-256 hash of action type, target resource, and parameters. Cross-session replay is invalid. Expired tokens are invalid. \texttt{uses\_remaining} is decremented atomically.

\paragraph{Normative Validation Rules.} Nine rules govern all enforcement decisions: (1)~\emph{Fail Closed}: missing data $\rightarrow$ deny. (2)~\emph{Conjunctive}: all six conditions must pass. (3)~\emph{Binding Precedence}: envelope overrides manifest. (4)~\emph{Temporal Validity}: expired envelope/token/budget $\rightarrow$ deny. (5)~\emph{Action-Hash Integrity}: token valid only if hash matches. (6)~\emph{Budget Precedence}: exceeded ceiling $\rightarrow$ suspend. (7)~\emph{Evidence Availability}: sink unreachable $\rightarrow$ deny. (8)~\emph{Composition Timing}: checked against full session history. (9)~\emph{Intent Conformance}: strict $\rightarrow$ deny; warn $\rightarrow$ admit + log; audit $\rightarrow$ admit + flag.

\section{Comparison Tables}\label{app:comparison}

\begin{table}[H]
\centering
\footnotesize
\caption{Standalone coverage of APC properties by existing authorization mechanisms.}
\label{tab:standalone-comparison}
\begin{tabular}{@{}lcccc@{}}
\toprule
\textbf{Property} & \textbf{OAuth+OPA} & \textbf{Prompts} & \textbf{Static Manifest} & \textbf{APC} \\
\midrule
Scope enforcement (G1)         & \checkmark & ---     & ---        & \checkmark \\
Scope attenuation              & Partial    & ---     & Tool-level & \checkmark \\
Composition closure (G2)       & ---        & ---     & ---        & \checkmark \\
Blast-radius mono.\ (G3)       & ---        & ---     & ---        & \checkmark \\
Approval binding (G4)          & ---        & ---     & ---        & \checkmark \\
Evidence commit.\ (G5)         & Partial    & ---     & ---        & \checkmark \\
Intent binding (G6)            & ---        & ---     & ---        & \checkmark \\
\bottomrule
\end{tabular}
\end{table}

\begin{table}[H]
\centering
\footnotesize
\caption{APC versus modern agentic security systems.}
\label{tab:framework-comparison-app}
\begin{tabularx}{\linewidth}{@{}lllll@{}}
\toprule
\textbf{Property} & \textbf{SEAgent} & \textbf{CaMeL} & \textbf{Progent} & \textbf{APC} \\
\midrule
Data stealing ASR & 0\%$^a$ & --- & --- & \textbf{0\%} \\
Composition closure & Policy-dep. & Dataflow-dep. & Policy-dep. & \textbf{Formal}$^b$ \\
Blast-radius mono. & --- & --- & Temporal$^c$ & \textbf{Structural} \\
Multi-hop delegation & Partial & --- & Partial$^c$ & \textbf{\checkmark} \\
Intent binding & --- & --- & --- & \textbf{\checkmark} \\
\bottomrule
\end{tabularx}
\vspace{2pt}
{\scriptsize $^a$As reported by Ji et al.\ Our simplified simulation yields 1.7\% DS ASR.\\$^b$Formal guarantee conditional on completeness of $X_{\mathrm{eff}}$.\\$^c$Progent v3 proves one agent's action space is non-increasing across policy updates (temporal), and discusses unified or per-sub-agent policy layers for multi-agent deployments; neither gives per-hop scope attenuation or cumulative budget inheritance along a delegation chain (structural).}
\end{table}

\section{Impact Calibration}\label{app:calibration}

Weights $w_\rho$, $w_\beta$, $w_\sigma$ and threshold $\theta$ in $I(a) = w_\rho \cdot \rho(a) + w_\beta \cdot Bl(a) + w_\sigma \cdot Se(a)$ are calibrated by three complementary methods: \emph{(i)~Expert elicitation} (domain experts rank actions, weights obtained by solving a constrained optimization problem, Kendall's $\tau \geq 0.8$). \emph{(ii)~Bayesian estimation} (for organizations with historical incident records). \emph{(iii)~Sensitivity analysis} ($\theta$ selected for false-autonomous rate $<1\%$, approval burden $<15\%$).

\section{Blast-Radius Calibration}\label{app:blast-calibration}

Each resource $r$ is assigned a score $\mathrm{blast}(r) \in [0,1]$ by infrastructure at deployment time:
\[
\mathrm{blast}(r) = w_s \cdot \mathrm{scope}(r) + w_v \cdot \mathrm{irrev}(r) + w_d \cdot \mathrm{sens}(r),
\]
with $w_s + w_v + w_d = 1$ and $w_s, w_v, w_d \geq 0$, where $\mathrm{scope}(r)$ is the normalized count of affected principals/systems, $\mathrm{irrev}(r)$ the degree of irreversibility within the recovery-time objective, and $\mathrm{sens}(r)$ the data-sensitivity classification. The assignment must satisfy monotonicity: if $r'$ dominates $r$ on all three factors then $\mathrm{blast}(r') \geq \mathrm{blast}(r)$. Default weights: $w_s = 0.4$, $w_v = 0.4$, $w_d = 0.2$.

\begin{table}[H]
\centering
\footnotesize
\caption{Illustrative blast scores under default weights.}
\label{tab:blast-examples}
\begin{tabularx}{\linewidth}{@{}lcccc>{\raggedright\arraybackslash}X@{}}
\toprule
\textbf{Resource} & $\mathrm{scope}$ & $\mathrm{irrev}$ & $\mathrm{sens}$ & $\mathrm{blast}$ & \textbf{Rationale} \\
\midrule
User calendar entry & 0.1 & 0.2 & 0.1 & 0.14 & Single user, recoverable, low sensitivity \\
Shared team document & 0.4 & 0.3 & 0.6 & 0.40 & Team scope, versioned, confidential \\
Customer PII record & 0.6 & 0.7 & 1.0 & 0.72 & Broad scope, hard to remediate, restricted \\
Production database & 0.9 & 0.9 & 0.8 & 0.88 & Org-wide, near-irreversible, restricted \\
External email send & 0.7 & 1.0 & 0.5 & 0.78 & Unrecallable, broad reach, variable sensitivity \\
\bottomrule
\end{tabularx}
\end{table}

The session ceiling $\beta_{\max}$ is derived from the organization's business-impact analysis for the task class, attenuated at each delegation hop ($\beta_{\max}(p_i) \leq \beta_{\max}(p_{i-1})$; conservative default: $\beta_{\max}(p_i) = 0.7 \cdot \beta_{\max}(p_{i-1})$). The proof of Theorem~\ref{thm:blast-radius} requires only that $\mathrm{blast}(r)$ values are consistent across the chain and that $\beta_{\max}$ is non-increasing; the specific values affect only the tightness of the bound.

\section{Code and Data Availability}\label{app:artifacts}

The reference implementation, evaluation harnesses, and all domain configurations supporting the reported results are available at \url{https://github.com/xmuruaga/bounded-agents}.

\bibliographystyle{ACM-Reference-Format}
\bibliography{references}

\end{document}